
\documentclass[11pt]{article}       
\usepackage{geometry}               
\geometry{letterpaper}          
\geometry{margin=1in}

\usepackage{graphicx}
\usepackage{caption}
\usepackage{subcaption}
\usepackage{enumerate}
\usepackage{hyperref}
\usepackage{upgreek}
\usepackage{todonotes}
\usepackage{cite}

\usepackage{amsmath} 
\usepackage{amssymb}  
\usepackage{amsthm}  
\usepackage{mathrsfs}
\usepackage{mathtools}
\usepackage{bm}
\usepackage{cite}
\usepackage{lipsum}
\usepackage[linesnumbered,ruled,vlined]{algorithm2e}
\usepackage{color}
\usepackage{array}
\usepackage{multirow}
\newtheorem{proposition}{Proposition}

\newtheorem{theorem}{Theorem}

\newtheorem{cor}{Corollary}
\newcolumntype{M}[1]{>{\centering\arraybackslash}m{#1}}

\newcommand{\dist}{\mathrm{dist}}

\newcommand{\dcvar}{\mathrm{DR\mbox{-}CVaR}}

\DeclareMathOperator*{\argmin}{arg\,min}

\title{Distributionally Robust Risk Map for Learning-Based Motion Planning and Control: A Semidefinite Programming Approach\thanks{This work was supported in part by  the Creative-Pioneering Researchers Program through SNU, the National Research Foundation of Korea funded by the MSIT(2020R1C1C1009766), the Information and Communications Technology Planning and Evaluation (IITP) grant funded by MSIT(2020-0-00857), and Samsung Electronics.}}

\author{Astghik~Hakobyan
\and
 Insoon Yang\thanks{A. Hakobyan, and I. Yang are with the Department of Electrical and Computer Engineering, Automation and Systems Research Institute,  Seoul National University, Seoul 08826, Korea, 
        {\tt\small \{astghikhakobyan, insoonyang\}@snu.ac.kr}} 
        }

\date{}

\begin{document}
\maketitle
\pagestyle{myheadings}
\thispagestyle{plain}

\begin{abstract}
This paper proposes a novel safety specification tool, called the \emph{distributionally robust risk map} (DR-risk map), for a mobile robot operating in a learning-enabled environment.  
Given the robot's position, the map aims to reliably assess the conditional value-at-risk (CVaR) of collision with obstacles whose movements are inferred by Gaussian process regression (GPR). 
Unfortunately, the inferred distribution is subject to errors, making it difficult to accurately evaluate the CVaR of collision. 
To overcome this challenge, this tool measures the risk under the worst-case distribution in a so-called \emph{ambiguity set} that characterizes allowable distribution errors. 
To resolve the infinite-dimensionality issue inherent in the construction of the DR-risk map, we derive a tractable semidefinite programming formulation that provides an upper bound of the risk, exploiting techniques from modern distributionally robust optimization. 
As a concrete application for motion planning, a distributionally robust RRT* algorithm is considered using the risk map that addresses distribution errors caused by GPR. 
Furthermore, a motion control method is devised using the DR-risk map in a learning-based model predictive control (MPC) formulation. 
In particular, a neural network approximation of the risk map is proposed to reduce the computational cost in solving the MPC problem.
 The performance and utility of the proposed risk map are demonstrated through simulation studies that show its ability to ensure the safety of mobile robots despite learning errors.
\end{abstract}

\section{Introduction}

Ensuring safety in motion planning and control critically depends on the quality of information about the possibly uncertain environment in which a robot operates. 
For example, a mobile robot may use sensor measurements
to take into account the uncertain behavior of other robots, human agents, or obstacles for collision avoidance. 
With advances in machine learning, sensing, and computing technologies,
the adoption of state-of-the-art learning techniques is rapidly growing 
for a robot to infer the evolution of its environment. 
Unfortunately, the accuracy of inference is often poor since it is subject to the quality of the observations, statistical models, and learning methods. Using inaccurately learned information in the robot's decision-making may induce unwanted behaviors, and, in particular, may lead to a collision. 
This work aims to develop a safety risk specification tool that is robust against distribution errors in learned information about moving obstacles and is thus useful for ensuring safety in learning-based motion planning and control.

Safety specification tools for systems with learning-enabled components
can be categorized into two classes. 
The first class concerns the safety of learning-enabled robots, while the second class considers learning-enabled environments. 
The tools in the first class use or learn reachable sets~\cite{gillula2012guaranteed, fisac2018general, shao2021reachability}, Lyapunov functions~\cite{richards2018lyapunov, taylor2019episodic}, or control barrier functions~\cite{wang2018safe, cheng2019end, taylor2020learning} as a certificate for safety when the system dynamics of robots are unknown.
The literature on the second class is relatively sparse. 
Existing methods to handle learning-enabled environments use chance constraints~\cite{du2011robot}, logistic functions~\cite{richter2014high}, collision detection via Monte Carlo sampling~\cite{eidehall2008statistical}, and detection of conflicts between intention and expectation~\cite{lefevre2012risk}, among others. 
Our method belongs to the second class and,
departing from previously used tools, we take a \emph{conditional value-at-risk} (CVaR) approach since CVaR is capable of distinguishing rare tail events~\cite{Rockafellar2002a}. 

This work is also related to learning-based motion planning and control, which are the main applications of our safety specification tool. 
The following two cases are considered in the literature: $(i)$ learning the system dynamics of robots, and $(ii)$ learning the environment.
The first case is the most well-studied direction, which is based on RRT*~\cite{bry2011rapidly, liu2014incremental}, model-predictive control~\cite{Aswani2013, Ostafew2016, Williams2018, hewing2019cautious}, and model-based reinforcement learning (RL)~\cite{Hester2012, venkatraman2016improved, Polydoros2017}, among other methods. These tools employ various learning or inference techniques to update unknown system model parameters that are, in turn, used to improve control actions or policies. On the other hand, the methods in the second class emphasize learning the environment.
In particular, for learning the behavior (or intention) of obstacles or other vehicles, 
several methods have been proposed that use inverse RL~\cite{Kuderer2015, Herman2015, Wulfmeier2017}, imitation learning~\cite{Kuefler2017, codevilla2018end}, and Gaussian mixture models~\cite{chernova2007confidence, Lenz2017}, among others.
The learned information about environments can then be used in probabilistic or robust motion planning and control algorithms~\cite{fulgenzi2009probabilistic, luber2012socially, aoude2013probabilistically,  pereira2013risk, chi2017risk, brito2019model}.
Our method is classified as the second type since it uses the
 learned information about the movement of obstacles.
However, unlike the previous approaches, we emphasize the importance of decision-making that is robust against potential errors caused by learning the environment.
For this, we take a distributionally robust optimization (DRO) approach~\cite{esfahani2018data, gao2016distributionally, kuhn2019wasserstein} to address  errors in learned information regarding the motion of obstacles.

In this work, we propose a novel safety specification tool, which we call the \emph{distributionally robust risk map} (DR-risk map).
It is a spatially varying function that specifies the safety risk in a way that is robust against errors in learning or prediction results about the obstacles' locations. 
Specifically, the obstacles' future trajectories are assumed to be inferred using GPR based on the current and past observations. 
However, the predicted probability distribution of the obstacles' locations is subject to errors, making it difficult to accurately evaluate the risk of collision. 
To resolve this issue, our method evaluates the risk under the worst-case distribution in a so-called \emph{ambiguity set}.
Thus, the robot's decision made using the DR-risk map will generate a safe behavior even when the true distribution deviates from the learned one within the ambiguity set.
Unfortunately, the computation of DR-risk is challenging since it involves the infinite-dimensional optimization problem over the ambiguity set of probability distributions.

The main contributions of this work are threefold.
First, we propose a tractable semidefinite programming (SDP) formulation that provides an upper bound of the DR-risk map. 
The SDP approach, which exploits techniques from DRO,  alleviates the infinite-dimensionality issue inherent in the DR-risk map.  Further, we provide its dual formulation, which has fewer generalized equalities.
Second, we demonstrate the utility of the DR-risk map in learning-based motion planning. 
A distributionally robust RRT* algorithm is proposed to use the risk map for generating a safe path despite the learning errors caused by GPR. 
Third, we devise a motion control tool that employs the neural network (NN) approximation of the DR-risk map. 
Our method uses MPC with risk constraints that can be evaluated by solving SDPs. 
To avoid solving the SDPs in real-time, we propose approximating the DR-risk map as an NN, which is then embedded in the MPC problem. 
Our NN approximation has the salient feature that the same NN can be used to approximate the DR-risk map for any time and any obstacles
since the dependence is encoded in the input information. 
The performance and utility of the DR-risk map are demonstrated through 
simulation studies for autonomous vehicles and service robots. 
The results of our experiments show that our motion planning and control tools successfully ensure safety even in the presence of distribution errors caused by GPR.

This paper has been significantly expanded from its preliminary conference version~\cite{Hakobyan2020}. The DR-risk map is formally defined, and its SDP approximation is proposed in this paper.
In particular, the construction of DRO is simplified without sampling from the distribution obtained by GPR. 
Furthermore, a motion planning algorithm is proposed using the DR-risk map, unlike the conference version, which focuses on motion control. 
Last but not least, the NN approximation of risk constraints in motion control is newly considered in this paper.

The remainder of the paper is organized as follows. In Section~\ref{sec:prel}, we introduce the problem setup and the GPR approach to learning the future trajectories of obstacles. In Section~\ref{sec:risk_map}, we define the DR-risk map and present its tractable reformulation as an SDP. In Section~\ref{sec:LBDRMP}, we propose a motion planning algorithm using the DR-risk map to address errors caused by GPR. In Section~\ref{sec:LBDRMC}, the risk map is approximated by an NN and applied to an MPC problem for motion control. Finally, in Section~\ref{sec:result}, we present the application of our risk map to motion planning and control problems through simulations in various environments.

\section{Preliminaries}\label{sec:prel}

\subsection{Notation}

We let  $\mathbb{R}_+^n$ denote the set of real vectors with non-negative entries.
The cone of symmetric matrices in $\mathbb{R}^{n \times n}$ is denoted by $\mathbb{S}^n$, while $\mathbb{S}_+^n$ denotes the cone of symmetric positive semidefinite matrices in $\mathbb{S}^n$.
For $A, B \in \mathbb{S}^n$, the notation $A \preceq B$ represents that $B - A \in \mathbb{S}_+^n$. 
We let $\mathcal{P}(\Xi)$ denote the set of Borel probability measures with support $\Xi$. 
The expected value of random variable $X$ with probability measure $\mathrm{P} \in \mathcal{P}(\Xi)$ is denoted by $\mathbb{E}^{\mathrm{P}}[X]$.

\subsection{Mobile Robot and Obstacles}~\label{sec:MRO}

In this work, we consider a mobile robot modeled by the following discrete-time system:
\begin{equation}
\begin{split}
x_r (t+1)&=f(x_r (t),u_r (t))\label{sysmod}\\
y_r(t)&=C x_r (t),
\end{split}
\end{equation}
where $x_r (t)\in \mathbb{R}^{n_x}$, $u_r (t)\in\mathbb{R}^{n_u}$ and $y_r (t)\in   \mathbb{R}^{n_y}$ are the robot's state,  input, and output, respectively, where the subscript `$r$' represents `robot'. The system output is defined as the Cartesian coordinates of the robot's center of mass (CoM).

The robot navigates a cluttered environment with $L$ moving obstacles, e.g., other robotic vehicles. 
The motion of the $\ell$th obstacle is described by the following discrete-time system for $\ell = 1, \ldots, L$:
\begin{align}
x_{o}^{\ell}(t+1)&=\phi^\ell (x_{o}^{\ell}(t),u_o^{\ell}(t))\\
y^{\ell}_o(t)&=C_o^\ell  x_o^{\ell}(t),
\label{obs_model}
\end{align}
where $x^{\ell}_o(t)\in\mathbb{R}^{n_{x}^\ell}$ and $u_o^{\ell}(t)\in\mathbb{R}^{n_{u}^\ell}$ are the obstacle's state and input, respectively.
The subscript `$o$' represents `obstacle'.
 The output $y^{\ell}_o(t)\in\mathbb{R}^{n_{y}}$ is the Cartesian coordinates of the obstacle's CoM and has the same dimension as the robot's output $y_r (t)$.
Here, $\phi^\ell$ is a possibly unknown (nonlinear) function. 
 In practice, $\phi^\ell$ can be replaced with its parametric approximation $\phi_w^\ell$, for example, using NNs, and the parameters $w$ can be estimated using training data. See Appendix~\ref{appendix1} for an example. 
 For ease of exposition, we assume that $\phi^\ell$ or its parametric approximation is given.

For safety, our robot should navigate within a safe region, which is determined by the obstacles' behaviors. The safe region for each obstacle can be defined as the region outside the open ball centered at the obstacle's CoM with \emph{safe distance} $r_{\ell} > 0$:
\begin{equation}
\mathcal{Y}^{\ell}(t):= \big \{y_r(t)\in\mathbb{R}^{n_{y}} \mid \mathrm{dist}(y_r(t),y^{\ell}_o(t)) \geq r_{\ell} \big \},
\label{safereg}
\end{equation}
where $\dist(y_r(t),y^{\ell}_o(t))$ is the Euclidean distance between the robot's CoM and the obstacle's CoM, defined by
\[
\mathrm{dist}(y_r(t),y^{\ell}_o(t)) :=\|y_r(t)-y^{\ell}_o(t)\|_2.
\]

\begin{figure}
\centering
\includegraphics[width=0.35\linewidth]{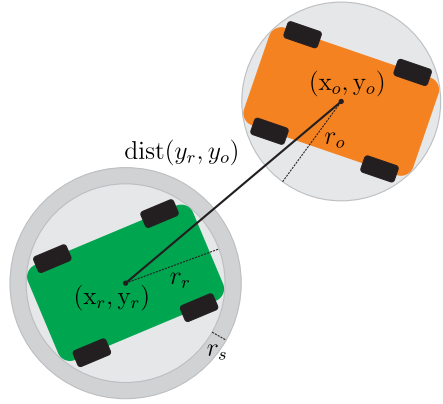}
\caption{The car-like robot (green) is centered at $y_r := (\mathrm{x}_r,\mathrm{y}_r)$, while the obstacle (orange) is centered at $y_o := (\mathrm{x}_o,\mathrm{y}_o)$.  The smallest balls enclosing the robot and the obstacle have radii $r_r$ and $r_o$, respectively. 
With margin $r_s$, the safe distance
$r_\ell$ can be chosen as $r_r + r_o + r_s$.} \label{fig:Safety_Def}
\end{figure}

An example of such a configuration is shown in Fig.~\ref{fig:Safety_Def}, where a car-like robot (green) should navigate to avoid a car-like obstacle  (orange). Both the robot and the obstacle are approximated by the smallest balls enclosing them with radii $r_r$ and $r^{\ell}_o$, respectively. Using an additional safety margin $r_s$, the  distance between the CoMs of the robot and the obstacle should be no smaller than the sum of all radii:
\[
r_{\ell}=r_r+r^{\ell}_o+r_s.
\]

Having $L$ surrounding obstacles, the safe region with respect to all obstacles is defined as the intersection of all the safe regions $\mathcal{Y}^\ell (t)$:
\[
\mathcal{Y}(t):= \bigcap_{\ell=1}^{L} \mathcal{Y}^\ell (t).
\]
Note that the safe region is time-varying.

\subsection{Learning the Motion of Obstacles via
Gaussian Process Regression}

Even though the dynamics $\phi^\ell$ of obstacles are assumed to be known or estimated using some function approximators, 
the actions taken by the obstacles are unknown; thus, our robot has no information about the obstacles' future behaviors. Furthermore, even if the actions were known, the resulting trajectories might include some inaccuracies since $\phi^\ell$ might not accurately describe the real motion of the obstacles. To take such uncertainties into account, the observations made by the robot can be useful for inferring (or learning) the obstacles' movements.

In this study, we use GPR, which is one of the most popular non-parametric methods for learning a probability distribution over all possible values of a function~\cite{rasmussen2003gaussian}. Ideally, GPR can be used to directly infer the future state of obstacle $\ell$ given the current state information. However, leveraging some information about the system dynamics can significantly increase the accuracy of predictions and reduce the size of the required training data. Hence, in this work, we aim to learn the function $\psi^\ell$ that corresponds to the control action of the obstacle $\ell$ given its state information and use it in conjunction with the obstacle dynamics $\phi^{\ell}$ to predict the future trajectory. For ease of exposition, we suppress the superscript $\ell$.

GPR is performed on a training dataset, which is constructed from previous observations about the obstacle's state and action. In particular,  at stage $t$,
the training input data is chosen as $\hat{\mathrm{x}}=\{x_o(t-1),x_o(t-2),\dots,x_o(t-M)\}$ with the corresponding training output data $\hat{\mathrm{y}}=\{u_o(t-1),u_o(t-2),\dots,u_o(t-M)\}$, where $M$ is the number of observations. Since observations are imperfect, we assume that for the $i$th observation
\[
\hat{\mathrm{y}}^{i}=\psi(\hat{\mathrm{x}}^{i})+v,
\]
where $v$ is an i.i.d. zero-mean Gaussian noise with  covariance $\Sigma^v=\mathrm{diag}([\sigma_{v,1}^2\;\sigma_{v,2}^2,\dots,\sigma_{v,n_{u}}^2])$.
Assuming that each control action has independent entries, the GPR dataset for the $j$th dimension of control action is constructed as
\[
\mathcal{D}_j=\big\{\big(\hat{\mathrm{x}}^{i},\hat{\mathrm{y}}_j^{i}\big),\;i=1,\dots,M\big\}
\]
for $j = 1, \ldots, n_u$. 

In GPR,  each dimension of $\psi(\cdot)$ has a Gaussian prior distribution with mean function $m_j(x)$ and kernel $k_j(x,x')$. In this paper, we use a zero-mean prior   with the following radial basis function (RBF) kernel:
\[
k_j(x,x')=\sigma_{f,j}^2\exp\Big[-\frac{1}{2}(x-x')^\top L_j^{-1}(x-x')\Big],
\]
where $L_j$ is a diagonal length scale matrix and   $\sigma_{f,j}^2$ is a signal variance.
The prior on the noisy observations is a normal distribution with mean function $m_j(\hat{\mathrm{x}}^{i})$ and covariance function $K_j(\hat{\mathrm{x}},\hat{\mathrm{x}})+\sigma_{v,j}^2I$, where $K_j(\hat{\mathrm{x}},\hat{\mathrm{x}})$ denotes the $M\times M$ covariance matrix of training input data, i.e., $K_j^{(l,k)}(\hat{\mathrm{x}},\hat{\mathrm{x}})=k_j(\hat{\mathrm{x}}^{(l)},\hat{\mathrm{x}}^{(k)})$.

For a new arbitrary test point $\mathbf{x}$, the posterior distribution of the $j$th output entry  is also Gaussian.
Its mean and covariance are calculated as follows:
\begin{align}
&\mu_u^j(\mathbf{x})= m_j(\mathbf{x})+K_j(\mathbf{x},\hat{\mathrm{x}})(K_j(\hat{\mathrm{x}},\hat{\mathrm{x}})+\sigma_{v,j}^2I)^{-1}(\hat{\mathrm{y}}_j-m_j(\hat{\mathrm{x}}))\label{mean}\\
&\Sigma_u^j(\mathbf{x}) = k_j(\mathbf{x},\mathbf{x})-K_j(\mathbf{x},\hat{\mathrm{x}})(K_j(\hat{\mathrm{x}},\hat{\mathrm{x}})+\sigma_{v,j}^2I)^{-1}K_j(\hat{\mathrm{x}},\mathbf{x}).\label{cov}
\end{align}
The resulting GP approximation of $\psi(\mathbf{x})$ is given by
\[
\psi(\mathbf{x})\sim\mathcal{N}(\mu_u(\mathbf{x}),\Sigma_u(\mathbf{x})),
\]
where $\mu_u(\mathbf{x})=[\mu_u^1(\mathbf{x}),\mu_u^2(\mathbf{x}),\dots, \mu_u^{n_u}(\mathbf{x})]^\top$ and $\Sigma_u(\mathbf{x})=\mathrm{diag}([\Sigma_u^1(\mathbf{x}),\Sigma_u^{2}(\mathbf{x}),\dots,\Sigma_u^{n_u}(\mathbf{x})])$.

The GP approximation of the obstacle's  input is computed given its current state.
 At stage $t$, for each prediction time $t+k$, where $k = 1, \dots, K$ and $K$ is the prediction horizon, the obstacle's state and action are approximated as a joint Gaussian distribution of the form
\[
\begin{bmatrix}
x_o(t+k)\\ u_o(t+k)
\end{bmatrix}\sim\mathcal{N}\bigg(
\begin{bmatrix}
\tilde{\mu}_x^{t,k}\\
\tilde{\mu}_u^{t,k} 
\end{bmatrix},\begin{bmatrix}\tilde{\Sigma}_x^{t,k} & \tilde{\Sigma}_{xu}^{t,k}\\
\tilde{\Sigma}_{ux}^{t,k} & \tilde{\Sigma}_u^{t,k}\end{bmatrix}\bigg),
\] 
where the superscript $(t,k)$ denotes the $(t+k)$th prediction at stage $t$. 
By the first-order Taylor expansion of \eqref{mean} and \eqref{cov}, the mean and covariance  information about $u_o(t+k)$ is obtained as
\begin{equation}
\begin{split}
\tilde{\mu}_{u}^{t,k}&=\mu_{u}(\tilde{\mu}_{x}^{t,k})\\
\tilde{\Sigma}_{u}^{t,k}&=\Sigma_{u}(\tilde{\mu}_{x}^{t,k})+\nabla\mu_{u}(\tilde{\mu}_{x}^{t,k})\tilde{\Sigma}_{x}^{t,k}\big(\nabla\mu_{u}(\tilde{\mu}_{x}^{t,k})\big)^\top\\
\tilde{\Sigma}_{xu}^{t,k}&=\tilde{\Sigma}_x^{t,k}(\nabla\mu_u(\tilde{\mu}_x^{t,k}))^\top.
\end{split}\label{Taylor_app}
\end{equation} 

To propagate the obstacle's state with the new distribution information about $u_o(t+k)$, we perform the following update starting from the current state $x_o(t)$: Set $\tilde{\mu}_x^{t,0} = x_o(t)$ and $\tilde{\Sigma}_x^{t,0} = \mathbf{0}$, and successively linearize $\phi$ around $(\tilde{\mu}_x^{t,k},\tilde{\mu}_u^{t,k})$:
\begin{equation}
\begin{split}
\tilde{\mu}_x^{k+1}&=\phi(\tilde{\mu}_x^{t,k},\tilde{\mu}_u^{t,k}),\\
\tilde{\Sigma}_x^{k+1}&= \nabla_x \phi(\tilde{\mu}_x^{t,k},\tilde{\mu}_u^{t,k})\tilde{\Sigma}_x^{t,k}\nabla_x \phi(\tilde{\mu}_x^{t,k},\tilde{\mu}_u^{t,k})^\top +\nabla_u \phi(\tilde{\mu}_x^{t,k},\tilde{\mu}_u^{t,k})\tilde{\Sigma}_u^{t,k}\nabla_u \phi(\tilde{\mu}_x^{t,k},\tilde{\mu}_u^{t,k})^\top\\
&+2\nabla_x \phi(\tilde{\mu}_x^{t,k},\tilde{\mu}_u^{t,k})\tilde{\Sigma}_{x u}^{t,k}\nabla_u \phi(\tilde{\mu}_x^{t,k},\tilde{\mu}_u^{t,k})^\top.
\end{split}\label{x_mu_sigma}
\end{equation}
The corresponding mean and covariance of the obstacle's output $y_o(t+k)$ are computed by
\begin{equation}
\tilde{\mu}_{y}^{t,k}=C_o \tilde{\mu}_{x}^{t,k},\quad 
\tilde{\Sigma}_{y}^{t,k}=C_o \tilde{\Sigma}_{x}^{t,k} (C_o)^\top\label{y_mu_sigma}.
\end{equation}

\begin{figure*}[tb]
\centering
     \begin{subfigure}[b]{0.33\linewidth}
         \centering
         \includegraphics[width=\linewidth]{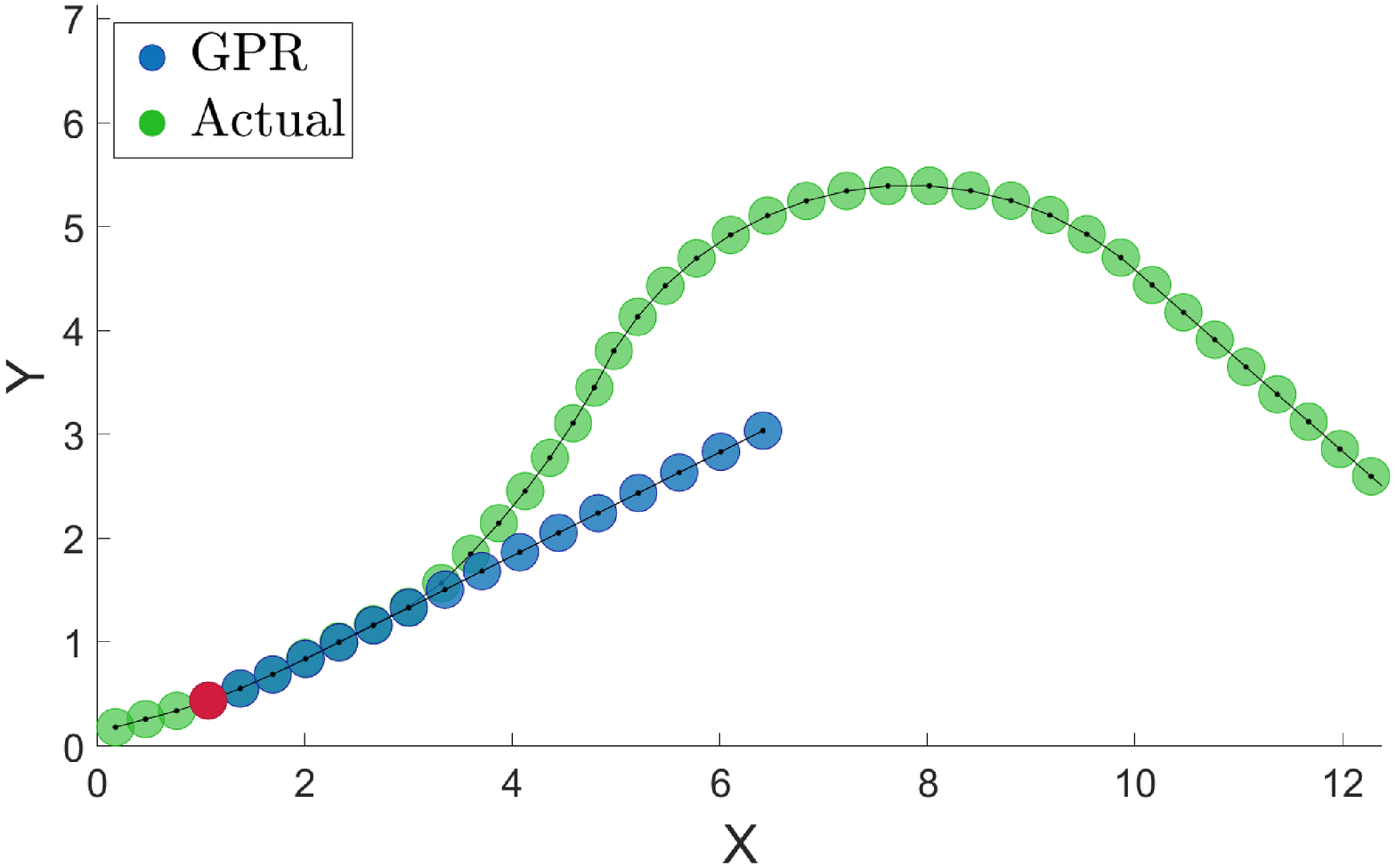}
         \caption{$t=3$}
         \label{fig:GP_3}
     \end{subfigure}%
     \begin{subfigure}[b]{0.33\linewidth}
         \centering
         \includegraphics[width=\linewidth]{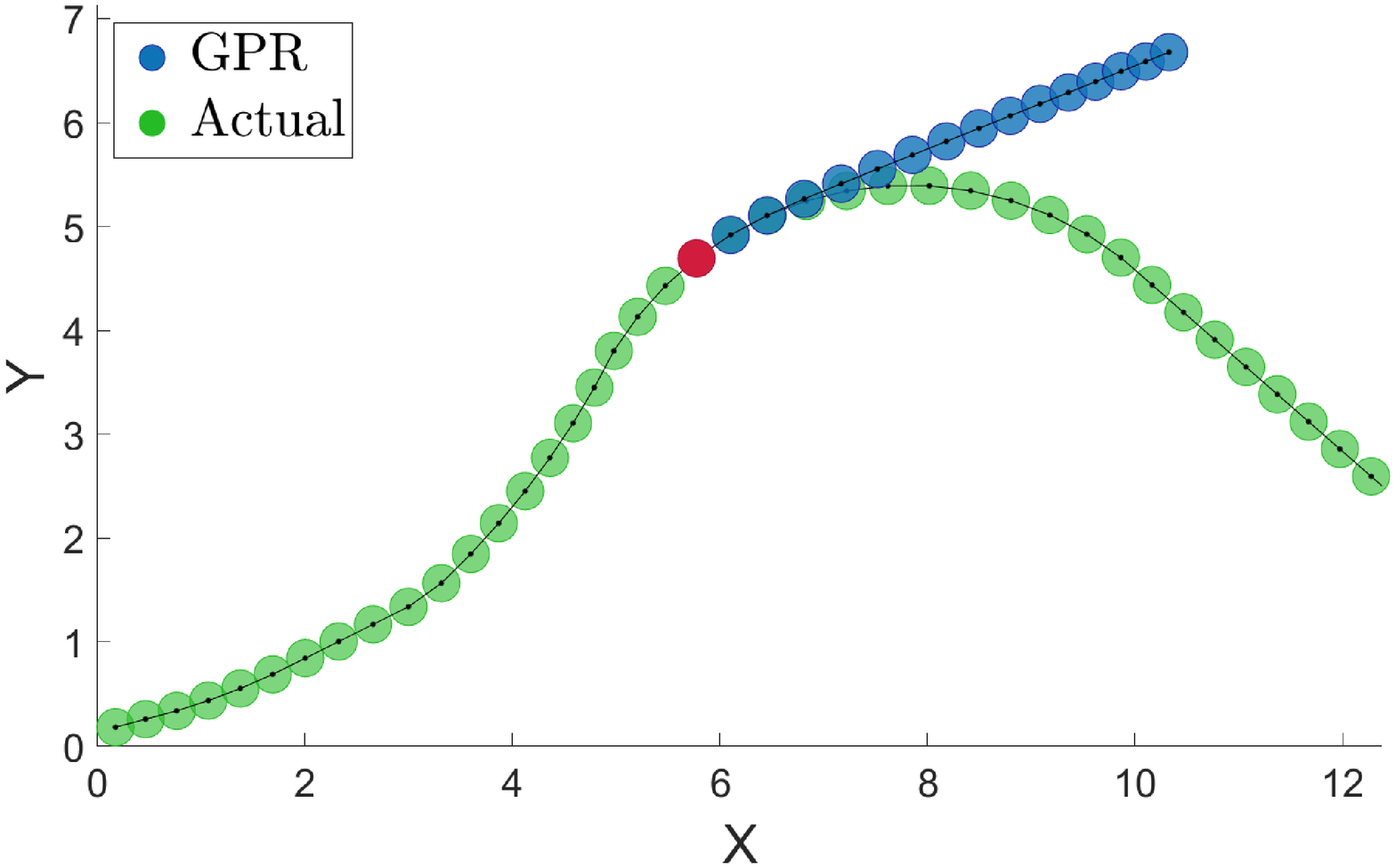}%
         \caption{$t=20$}
         \label{fig:GP_20}
     \end{subfigure}%
     \begin{subfigure}[b]{0.33\linewidth}
         \centering
         \includegraphics[width=\linewidth]{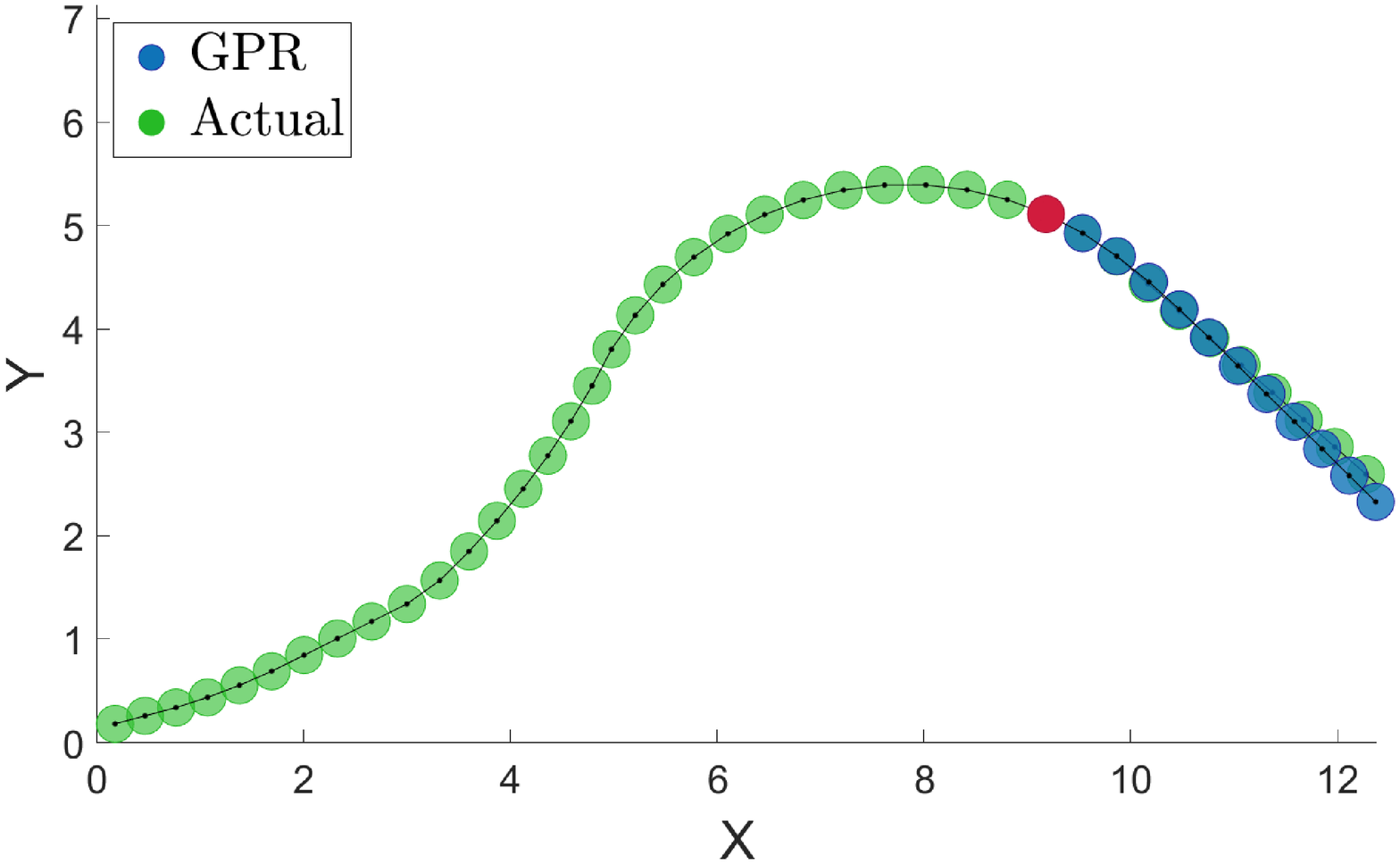}
         \caption{$t=29$}
         \label{fig:GP_29}
     \end{subfigure}%
        \caption{Trajectories of an obstacle predicted using GPR. 
        The mean of each trajectory is represented by a point, while the covariance is represented by an ellipsoid.}
        \label{fig:GP_example}
\end{figure*}

Fig.~\ref{fig:GP_example} shows an example of inferring an obstacle's motion via GPR for $K=15$ time steps, where the obstacle moves according to simple car dynamics. As shown in Fig.~\ref{fig:GP_example}~(a), the predicted trajectory in the early stages does not follow the actual trajectory, as we do not have much information from the previous observations. However, even when more data are collected, it is impossible to predict the curvature of the trajectory due to a sudden change in the heading angle of the obstacle (Fig.~\ref{fig:GP_example}~(b)).
 As time goes on and there are no sudden changes in the obstacle's behavior, the learned trajectory becomes closer to the actual trajectory (Fig.~\ref{fig:GP_example}~(c)). As illustrated in this example, the prediction results of GPR are not always reliable. To guarantee safety even in such cases, we propose a distributionally robust approach in the following section.

\section{Distributionally Robust Risk Map with Wasserstein Distance}\label{sec:risk_map}

To perform safe motion planning and control, the robot may want to estimate the risk of collision at any location in the configuration space with respect to the $L$ obstacles. 
However, it is challenging to measure the risk of collision in a reliable way since the results of GPR may be inaccurate, as demonstrated in the previous section.
To resolve this issue, we propose the \emph{distributionally robust risk map}, which is a spatially varying function of the robot's current position. It estimates the conditional value-at-risk (CVaR) of collision in a distributionally robust manner using the possibly erroneous results of GPR.

\subsection{Measuring the Risk of Collision Using CVaR}

To begin, we define the \emph{loss of safety} at each prediction time $t+k$,  evaluated at $t$, with respect to obstacle $\ell$ as
\begin{equation}
J_{t,k} (y_r, y^{\ell}_o)=-\|y_r(t+k)-y^{\ell}_o(t+k)\|_2^2.
\end{equation}
It follows from \eqref{safereg} that $J_{t,k} (y_r, y^{\ell}_o)+r_{\ell}^2$ is non-positive if and only if the robot navigates in the safe region $\mathcal{Y}^\ell (t+k)$. 
However, due to the uncertainty in the predicted $y^{\ell}_o(t+k)$, 
it may be too conservative to impose the deterministic constraint 
$J_{t,k} (y_r, y^{\ell}_o) +r_{\ell}^2 \leq 0$.

Instead, we consider the CVaR of the loss of safety, defined by
\begin{equation} \nonumber
\begin{split}
&\mathrm{CVaR}^{\mathrm{P}_{t,k}^{\ell}}_\alpha\big[ J_{t,k} (y_r, y^{\ell}_o) \big] := \min_{z \in \mathbb{R}} \mathbb{E}^{\mathrm{P}_{t,k}^{\ell}} 
\bigg [
z + \frac{( J_{t,k} (y_r, y^{\ell}_o) - z)^+}{1-\alpha}
\bigg ],
\end{split}
\end{equation}
where  $\mathrm{P}_{t,k}^{\ell}$ is the probability distribution of $y_o^\ell (t+k)$, estimated by GPR~\eqref{y_mu_sigma} at time $t$, and $(z)^+ := \max\{z, 0\}$.
The CVaR of $J_{t,k} (y_r, y^{\ell}_o)$ measures the conditional expectation of the loss within the $(1-\alpha)$ worst-case quantile as illustrated in Fig.~\ref{fig:CVaR}. 
Thus, if $\mathrm{CVaR}^{\mathrm{P}_{t,k}^{\ell}}_\alpha\big[ J_{t,k} (y_r, y^{\ell}_o) \big]+r_\ell^2 \leq 0$, then the robot is located in the safe region with a probability of no less than $\alpha$. 
As it takes into account the tail distribution through conditional expectation,
the CVaR constraint is capable of distinguishing rare events compared to chance constraints~\cite{Rockafellar2002a}. 
Furthermore, CVaR is \emph{coherent} in the sense of Artzner {\it et al.}~\cite{Artzner1999} and  is advocated as a rational risk measure in robotics applications, unlike the value-at-risk (VaR), or, equivalently, chance constraints~\cite{Majumdar2017isrr}.
Thus, CVaR has recently received a considerable attention in robotic decision-making problems for safety~\cite{sharma2020risk, singh2018risk, jin2019risk, hakobyan2020wasserstein, ahmadi2020}.

In practice, it is unlikely that we can accurately compute the CVaR of the loss of safety since $\mathrm{P}_{t,k}^{\ell}$ obtained by GPR is imperfect. 
To handle such distribution errors, 
we propose using the following distributionally robust version of CVaR:
\begin{equation}\label{DR_risk}
\dcvar_{\alpha,\theta}\big[J_{t,k} (y_r,y^{\ell}_o)\big]:=
\sup_{\mathrm{Q}_{t,k}^{\ell}\in\mathbb{D}_{t,k}^{\ell}}\mathrm{CVaR}^{\mathrm{Q}_{t,k}^{\ell}}_{\alpha}\big[J_{t,k} (y_r,y^{\ell}_o)\big],
\end{equation}
which measures the risk of unsafety for the worst-case distribution in a an ambiguity set $\mathbb{D}_{t,k}^\ell$. We consider the \emph{Wasserstein ambiguity set}, constructed as a ball with radius $\theta>0$ around the nominal distribution  $\mathrm{P}_{t,k}^{\ell}$, obtained by GPR, i.e.,
\begin{align}
\mathbb{D}_{t,k}^{\ell}:=\{\mathrm{Q}\in\mathcal{P}(\mathbb{R}^{n_y})\mid W_2(\mathrm{Q},\mathrm{P}_{t,k}^{\ell})\leq \theta\},\label{WBall}
\end{align}
where  $W_2(\mathrm{Q},\mathrm{P}_{t,k}^{\ell})$ is the 2-Wasserstein distance between $\mathrm{Q}$ and $\mathrm{P}_{t,k}^{\ell}$. 
The $p$-Wasserstein metric $W_p (\mathrm{Q},\mathrm{P})$ between two distributions $\mathrm{Q}$ and $\mathrm{P}$ supported on $\Xi\subseteq \mathbb{R}^m$ is defined as
\begin{equation} \nonumber
\begin{split}
W_p (\mathrm{Q}, \mathrm{P}) := \bigg[ \min_{\kappa \in \mathcal{P}(\Xi^2)} \Big \{
& \int_{\Xi^2} \| y - y' \|^{p} \; \mathrm{d} \kappa (y, y') \mid \Pi^1 \kappa = \mathrm{Q}, \Pi^2 \kappa = \mathrm{P}
\Big \}\bigg]^{1/p},
\end{split}
\end{equation}
where $\kappa$ is the transportation plan, the $i$th marginal of which is denoted by $\Pi^i \kappa$. It represents the minimum energy cost for transporting the mass from $\mathrm{Q}$ to $\mathrm{P}$ with the cost of moving a unit mass from position $y$ to position $y'$ prescribed by $\|y-y'\|^p$, where $\|\cdot\|$ is a norm on $\mathbb{R}^{m}$.

It is well known that for the standard Euclidean norm $\| \cdot \|_2$ the 2-Wasserstein distance between two normal distributions $\mathrm{Q}=\mathcal{N}(\mu_1,\Sigma_1)$ and $\mathrm{P}=\mathcal{N}(\mu_2,\Sigma_2)$ has a closed-form expression~\cite{gelbrich1990formula}:
\begin{equation}\nonumber
W_{2}(\mathrm{Q},\mathrm{P}) = \sqrt{\|\mu_1 - \mu_2\|_2^2+B^2(\Sigma_1,\Sigma_2)},
\end{equation}
where
\[
B^2(\Sigma_1,\Sigma_2):=\mathrm{Tr}\Big[\Sigma_1 +\Sigma_2 -2\big(\Sigma_1^{1/2}\Sigma_2 \Sigma_1^{1/2}\big)^{1/2}\Big].
\]

\begin{figure}[t!]
\centering
\includegraphics[width=2.75in]{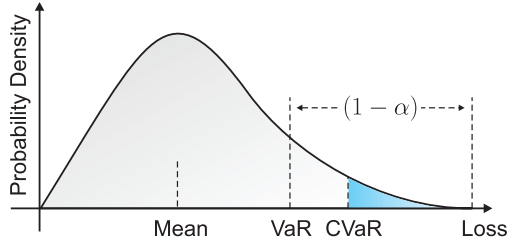}
\caption{Conditional value-at-risk of a random loss}
\label{fig:CVaR}
\end{figure}

The Wasserstein metric is also known as the \emph{earth mover's distance}, as it can be interpreted as the minimum cost of turning one pile of earth into another, where each distribution is viewed as a unit amount of earth. The Wasserstein ambiguity sets have several advantages over other types of ambiguity sets.
First, the Wasserstein DRO problem is capable of anticipating realizations of uncertainty that differ from the predicted ones, unlike DRO approaches that use phi-divergence~\cite{gao2016distributionally}. Second, Wasserstein ambiguity sets provide a powerful finite sample guarantee  for empirical nominal distributions and this feature is useful in sequential decision-making problems~\cite{esfahani2018data, yang2020wasserstein, Kim2021}.
Third, Wasserstein DRO is strongly related to the regularization techniques in machine learning and can be applied to alleviate overfitting~\cite{kuhn2019wasserstein}.

Concerning all the obstacles, we define the \emph{distributionally robust risk map} (DR-risk map)
$\mathcal{R}_{t,k}: \mathbb{R}^{n_y} \to \mathbb{R}$  for prediction time $t+k$, evaluated at $t$,  as
\begin{equation} 
\mathcal{R}_{t,k}(y_r):=\max_{\ell=1,\dots,L}\mathcal{R}^{\ell}_{t,k}(y_r,\mathcal{Y}^{\ell}),\label{mod_DRCVaR}
\end{equation}
where
\begin{equation}
\mathcal{R}_{t,k}^{\ell}(y_r,\mathcal{Y}^{\ell}):=\Big(\dcvar_{\alpha,\theta}\big[J_{t,k} (y_r,y^{\ell}_o)\big] +r_{\ell}^2\Big)^+. \label{pos_risk}
\end{equation}
The DR-risk map returns the maximum risk for all obstacles.  Its value is zero if there is no risk; otherwise, its value is positive.
In our safe motion planning and control methods, the following constraint is used to limit the risk of collision:
\[
\mathcal{R}_{t,k}(y_r) \leq \delta,
\]
where $\delta \geq 0$ is a risk tolerance parameter.

\subsection{Semidefinite Programming Formulation}

Unfortunately, it is nontrivial to directly compute the DR-risk map $\mathcal{R}_{t,k}(y_r)$ or its proxy\\
$\dcvar_{\alpha,\theta} [J_{t,k} (y_r,y^{\ell}_o)]$
as this involves an infinite-dimensional optimization problem over the set of probability distributions. 
We reformulate it as a finite-dimensional problem by exploiting some structural properties of CVaR and Wasserstein distance. 
The following theorem presents the result of reformulation as a semidefinite program (SDP), where the dependence on $t,k$ and $\ell$ is encoded solely in $y_r(t+k), \tilde{\mu}_y^{t,k,\ell}$ and $\tilde{\Sigma}_y^{t,k,\ell}$. Later, this feature will allow us to approximate the risk map by a single NN, independent of $t,k$ and $\ell$.

\begin{theorem}\label{thm:sdp}
Let  $\mathrm{P}_{t,k}^{\ell}$ be the  distribution of $y_o^{\ell}$ with mean $\tilde{\mu}_y^{t,k,\ell}$ and covariance $\tilde{\Sigma}_y^{t,k,\ell}$, estimated by GPR. Then, the DR-CVaR~\eqref{DR_risk} has the following upper bound:
\begin{equation}\label{DR-CVaR}
\begin{split}
\min  \;& z+\frac{\tau+\varepsilon+\mathrm{Tr}[Z]+\lambda\big(\theta^2-\|\tilde{\mu}_y^{t,k,\ell}\|_2^2-\mathrm{Tr}[\tilde{\Sigma}_y^{t,k,\ell}]\big)}{1-\alpha}\\
\textnormal{s.t.}\;& \begin{bmatrix}\lambda I-\Gamma & \gamma+\lambda\tilde{\mu}_y^{t,k,\ell}\\ \big( \gamma+\lambda \tilde{\mu}_y^{t,k,\ell}\big)^\top & \varepsilon\end{bmatrix}\succeq 0\\
& \begin{bmatrix}\lambda I-\Gamma & \lambda\big(\tilde{\Sigma}_y^{t,k,\ell}\big)^{1/2} \\ \lambda\big(\tilde{\Sigma}_y^{t,k,\ell}\big)^{1/2} & Z\end{bmatrix}\succeq 0\\
& \begin{bmatrix}\Gamma+I & \gamma-y_r(t+k)\\ \big(\gamma -y_r(t+k) \big)^\top & \tau+z+\|y_r(t+k)\|_2^2\end{bmatrix}\succeq 0\\
& \begin{bmatrix}\Gamma & \gamma\\ \gamma^\top & \tau\end{bmatrix}\succeq 0\\
&\lambda\in\mathbb{R}_+, \; z\in\mathbb{R}, \; \tau\in\mathbb{R},\; \gamma\in\mathbb{R}^{n_y}\\
&\Gamma\in\mathbb{S}^{n_y}, \; \varepsilon\in\mathbb{R}_+, \; Z\in\mathbb{S}_+^{n_y}.
\end{split}
\end{equation}
\end{theorem}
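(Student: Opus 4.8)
The plan is to bound the distributionally robust CVaR by a chain of inequalities, each of which can only increase the value, so that strong duality is never actually required --- only weak duality and set enlargement. First I would exploit the variational form of CVaR together with the max--min inequality to pull the scalar minimization outside the worst-case distribution: since $\dcvar_{\alpha,\theta}[J_{t,k}] = \sup_{\mathrm{Q}\in\mathbb{D}}\min_{z}\mathbb{E}^{\mathrm{Q}}[z + (J_{t,k}-z)^+/(1-\alpha)] \le \min_z \big[z + \tfrac{1}{1-\alpha}\sup_{\mathrm{Q}\in\mathbb{D}}\mathbb{E}^{\mathrm{Q}}[(J_{t,k}-z)^+]\big]$, the outer $\min_z$ matches the free variable $z$ in the SDP. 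I would then note that $(J_{t,k}-z)^+ = \max\{-\|y_r-y_o\|_2^2 - z,\,0\}$ is the pointwise maximum of two quadratics in $y_o$; writing $\tilde y = (y_o,1)$, these are $\tilde y^\top A_1 \tilde y$ and $\tilde y^\top A_2 \tilde y$ with $A_1 = \begin{bmatrix}-I & y_r\\ y_r^\top & -\|y_r\|_2^2 - z\end{bmatrix}$ and $A_2 = 0$.

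Next, because the theorem only claims an upper bound, I would relax the Wasserstein ball $\mathbb{D}$ to the larger Gelbrich (mean--covariance) ball. The closed-form lower bound $W_2(\mathrm{Q},\mathrm{P}) \ge \sqrt{\|\mu_{\mathrm{Q}}-\tilde\mu_y\|_2^2 + B^2(\Sigma_{\mathrm{Q}},\tilde\Sigma_y)}$ shows that every $\mathrm{Q}$ within Wasserstein distance $\theta$ has first two moments inside $\{(\mu,\Sigma): \|\mu-\tilde\mu_y\|_2^2 + B^2(\Sigma,\tilde\Sigma_y)\le\theta^2\}$. Enlarging the feasible set to all distributions whose moments lie in this set only raises the supremum, and it is precisely this step that makes the bound depend on $\mathrm{P}_{t,k}^\ell$ solely through $(\tilde\mu_y,\tilde\Sigma_y)$, as asserted in the theorem.

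The core is then a moment problem. For a fixed mean $\mu$ and covariance $\Sigma$ (equivalently second-moment matrix $\Omega = \begin{bmatrix}\Sigma+\mu\mu^\top & \mu\\ \mu^\top & 1\end{bmatrix}$), weak duality for the generalized moment problem gives $\sup_{\mathrm{Q}}\mathbb{E}^{\mathrm{Q}}[(J_{t,k}-z)^+] \le \mathrm{Tr}[\Omega P]$ for any $P = \begin{bmatrix}\Gamma & \gamma\\ \gamma^\top & \tau\end{bmatrix}$ with $P\succeq A_1$ and $P\succeq A_2$; these two matrix inequalities are exactly the third and fourth constraints of the SDP. I would then treat the moments as free variables and dualize the Gelbrich radius constraint with a multiplier $\lambda\ge0$, producing $\lambda\theta^2 + \sup_{\mu,\Sigma}\{\mathrm{Tr}[\Omega P] - \lambda(\|\mu-\tilde\mu_y\|_2^2 + B^2(\Sigma,\tilde\Sigma_y))\}$. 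Separating mean and covariance, the $\mu$-supremum is a concave quadratic, finite only when $\lambda I - \Gamma\succeq0$ and equal to $(\gamma+\lambda\tilde\mu_y)^\top(\lambda I-\Gamma)^{-1}(\gamma+\lambda\tilde\mu_y)$; a Schur complement bounds this by $\varepsilon$, yielding the first constraint. The $\Sigma$-supremum is $\sup_{\Sigma\succeq0}\{2\lambda\,\mathrm{Tr}[(\tilde\Sigma_y^{1/2}\Sigma\tilde\Sigma_y^{1/2})^{1/2}] - \mathrm{Tr}[\Sigma(\lambda I-\Gamma)]\}$, whose value is the conjugate of the Bures term, $\lambda^2\mathrm{Tr}[(\lambda I-\Gamma)^{-1}\tilde\Sigma_y]$, and a Schur complement represents it by $\mathrm{Tr}[Z]$, yielding the second constraint. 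Collecting the surviving constant $\tau$ with $\varepsilon$, $\mathrm{Tr}[Z]$, and $\lambda(\theta^2 - \|\tilde\mu_y\|_2^2 - \mathrm{Tr}[\tilde\Sigma_y])$ (using $\|\tilde\mu_y\|_2^2 + \mathrm{Tr}[\tilde\Sigma_y] = \mathbb{E}^{\mathrm{P}}[\|y\|_2^2]$), dividing by $1-\alpha$, and adding $z$ reproduces the SDP objective exactly; minimizing over all introduced variables gives the claimed bound.

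The main obstacle I anticipate is the covariance/Bures step: evaluating $\sup_{\Sigma\succeq0}\{2\lambda\,\mathrm{Tr}[(\tilde\Sigma_y^{1/2}\Sigma\tilde\Sigma_y^{1/2})^{1/2}] - \mathrm{Tr}[\Sigma(\lambda I - \Gamma)]\}$ in closed form and representing it by the semidefinite constraint involving $\tilde\Sigma_y^{1/2}$ and $Z$. This requires the matrix-square-root conjugate identity, applied consistently with the conditions $\lambda I - \Gamma\succeq0$ and $\Sigma\succeq0$. Everything else reduces to elementary max--min inequalities, set enlargement, and Schur complements; since each transition is an inequality in the safe direction, I would not need to establish strong duality or attainment, which keeps the argument free of Slater-type regularity conditions.
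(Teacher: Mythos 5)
Your proposal is correct and takes essentially the same route as the paper's proof: minimax inequality to pull the CVaR minimization outside, relaxation of the Wasserstein ball to the Gelbrich mean--covariance set, weak duality for the inner Chebyshev moment problem (yielding the third and fourth LMIs), and dualization of the moment uncertainty (yielding the first two LMIs and the $\lambda$-term in the objective). The only difference is presentational: you derive inline, via the piecewise-quadratic structure, the Bures conjugate, and Schur complements, what the paper cites from Zymler et al.\ (Lemma~A.1, for the Chebyshev dual) and Kuhn et al.\ (the SDP form of the Gelbrich support function).
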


Its proof is contained in Appendix~\ref{appendix2}.
The SDP problem \eqref{DR-CVaR} can be solved using well-known algorithms, such as interior-point methods~\cite{andersen2003implementing, toh1999sdpt3, sturm1999using},  splitting methods~\cite{o2016conic}, augmented Lagrangian methods~\cite{kovcvara2003pennon}, etc. Its dual problem is more of an interest, as it involves fewer generalized equalities.

\begin{cor}\label{cor:sdp}
The dual problem of \eqref{DR-CVaR} can be expressed as the following SDP:
\begin{equation}
\begin{split}
\max \; & 2W_{12}^\top y_r(t+k) -\mathrm{Tr}[W_{11}]- \|y_r(t+k)\|_2^2\\
\textnormal{s.t. } \; & \frac{1}{1-\alpha}\big(\theta^2 - \|\tilde{\mu}_y^{t,k,\ell}\|_2^2-\mathrm{Tr}[\tilde{\Sigma}_y^{t,k,\ell}]\big)-2X_{12}^\top \tilde{\mu}_y^{t,k,\ell} \\
&-\mathrm{Tr}[X_{11}+Y_{11}+2Y_{12}^\top \big(\tilde{\Sigma}_y^{t,k,\ell} \big)^{1/2}]\geq 0\\
& X_{11}+Y_{11}=W_{11}+V_{11}\\
& X_{12}+W_{12}+V_{12} = 0 \\
& W_{22} = 1, \; V_{22} = \frac{1}{1-\alpha} - 1\\
& X_{22} \leq \frac{1}{1-\alpha}, \; Y_{22} \preceq \frac{1}{1-\alpha}I\\
& Y\in\mathbb{S}^{2n_y}_+, \; X,W,V\in\mathbb{S}^{n_y+1}_+.
\end{split}\label{DR-CVaR_dual}
\end{equation}
\end{cor}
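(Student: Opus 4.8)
The plan is to obtain \eqref{DR-CVaR_dual} as the Lagrangian (conic) dual of the primal SDP \eqref{DR-CVaR}, exploiting that the latter is a convex program whose only couplings are the four linear matrix inequalities and the cone restrictions $\lambda\in\mathbb{R}_+$, $\varepsilon\in\mathbb{R}_+$, $Z\in\mathbb{S}^{n_y}_+$. First I would name the four constraint matrices of \eqref{DR-CVaR} as $M_1,\dots,M_4$ in the order they appear, and attach to them dual multipliers in the matching positive semidefinite cones: $X,W,V\in\mathbb{S}^{n_y+1}_+$ dual to the $(n_y+1)\times(n_y+1)$ blocks $M_1,M_3,M_4$, and $Y\in\mathbb{S}^{2n_y}_+$ dual to the $2n_y\times 2n_y$ block $M_2$. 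Each multiplier is partitioned conformally into the blocks $X_{11},X_{12},X_{22}$ (and likewise for $W,V,Y$) that already appear in \eqref{DR-CVaR_dual}, so the bookkeeping is guided by the target form. Throughout I suppress the superscripts $t,k,\ell$ and write $\tilde\mu_y,\tilde\Sigma_y$.

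Next I would form the Lagrangian $L$ as the objective of \eqref{DR-CVaR} minus $\mathrm{Tr}[XM_1]+\mathrm{Tr}[YM_2]+\mathrm{Tr}[WM_3]+\mathrm{Tr}[VM_4]$, and expand each trace through its block structure. The key mechanical rule is that every off-diagonal block contributes a factor of two; for instance $\mathrm{Tr}[XM_1]=\lambda\,\mathrm{Tr}[X_{11}]-\mathrm{Tr}[X_{11}\Gamma]+2X_{12}^\top(\gamma+\lambda\tilde\mu_y)+\varepsilon X_{22}$, while in $\mathrm{Tr}[YM_2]$ the block $Y_{12}$ pairs with $\lambda(\tilde\Sigma_y)^{1/2}$ to produce the term $2\lambda\,\mathrm{Tr}[Y_{12}^\top(\tilde\Sigma_y)^{1/2}]$, and $\mathrm{Tr}[WM_3]$ generates the purely constant pieces $\mathrm{Tr}[W_{11}]$, $-2W_{12}^\top y_r(t+k)$, and $W_{22}\|y_r(t+k)\|_2^2$.

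I would then minimize $L$ over the primal variables by collecting coefficients. The \emph{free} variables $z,\tau,\gamma,\Gamma$ force their coefficients to vanish, yielding the equality constraints $W_{22}=1$, then $V_{22}=\tfrac{1}{1-\alpha}-1$, then $X_{12}+W_{12}+V_{12}=0$, and finally $X_{11}+Y_{11}=W_{11}+V_{11}$ (the last from the fact that $\Gamma$ enters all four $M_i$ with signs $-,-,+,+$). The \emph{cone-constrained} variables instead give inequalities: $\varepsilon\ge 0$ forces its coefficient nonnegative, giving $X_{22}\le\tfrac{1}{1-\alpha}$; $Z\succeq 0$ forces its matrix coefficient positive semidefinite, giving $Y_{22}\preceq\tfrac{1}{1-\alpha}I$; and $\lambda\ge 0$ forces the remaining scalar coefficient to be nonnegative, which after substituting is exactly the $\lambda$-inequality of \eqref{DR-CVaR_dual}. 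Once all stationarity relations are imposed, the only surviving terms of $L$ are the constants from $\mathrm{Tr}[WM_3]$, which (using $W_{22}=1$) collapse to $2W_{12}^\top y_r(t+k)-\mathrm{Tr}[W_{11}]-\|y_r(t+k)\|_2^2$, the dual objective.

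The hard part will be the careful sign and factor-of-two bookkeeping where variables are shared across several matrices, namely $\Gamma$ appearing in all four $M_i$ and $\lambda$ appearing in both $M_1$ and $M_2$, together with the correct conversion of the cone-constrained stationarity conditions into the dual-cone inequalities rather than equalities. I would also record that \eqref{DR-CVaR} is a strictly feasible convex SDP (Slater's condition holds for sufficiently large $\lambda,\varepsilon$ and $Z$), so strong duality is in force and \eqref{DR-CVaR_dual} indeed attains the same optimal value, justifying the phrase ``dual problem.''
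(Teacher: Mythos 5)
Your derivation is correct and follows essentially the same route as the paper's proof: Lagrangian duality applied directly to the primal SDP with block-partitioned positive semidefinite multipliers $X,Y,W,V$ for the four LMIs, where your treatment of $\lambda,\varepsilon,Z$ (minimizing the Lagrangian over their cones to get dual-cone inequalities) is equivalent to the paper's introduction and elimination of explicit slack multipliers $\eta,\beta,U$, and every stationarity and dual-cone condition you list matches the paper's dual function computation. The only caveat is your closing strong-duality claim: the corollary asserts only the \emph{form} of the dual, and the paper explicitly cautions that strong duality is not guaranteed to hold for all $\tilde{\mu}_y^{t,k,\ell}$ and $\tilde{\Sigma}_y^{t,k,\ell}$, so that remark---although your Slater sketch is plausible---is stronger than what the paper commits to and is not needed for the statement being proved.
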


Its proof can be found in Appendix~\ref{appendix2}.
The dual problem is also a tractable SDP problem, which can be solved using the same algorithms as for the primal. However,  the dual problem~\eqref{DR-CVaR_dual} has only less linear matrix inequality constraint in addition to a number of linear equality and inequality constraints, which are easier to handle for most of the off-the-shelf solvers than the positive semidefinite constraints in the primal problem~\eqref{DR-CVaR}. Even though strong duality is not guaranteed to hold for all $\tilde{\mu}_y^{t,k,\ell}$ and $\tilde{\Sigma}_y^{t,k,\ell}$, the dual problem is still useful since in some cases the SDP solver might fail to solve~\eqref{DR-CVaR} due to numerical issues. We can use the solution to the dual problem if there is no primal solution returned by the solver.

\begin{figure*}[tb]
\centering
\includegraphics[width=6in]{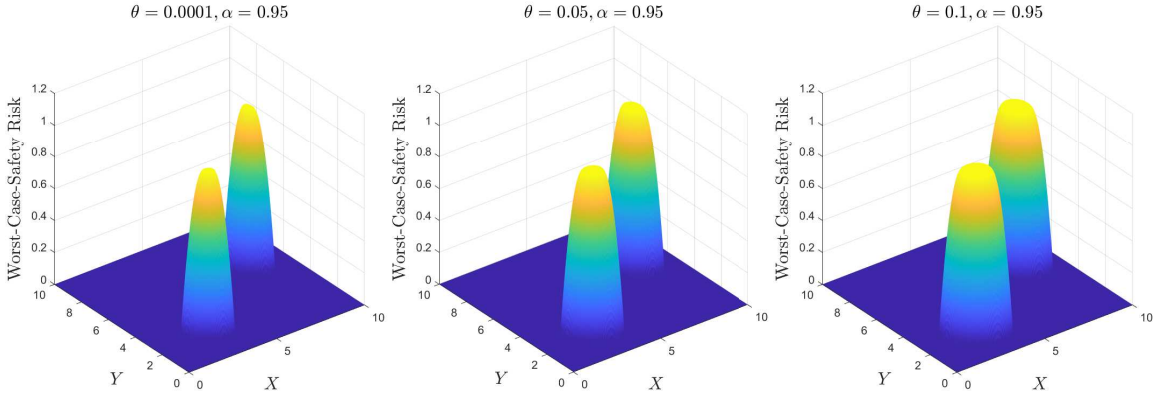}
\caption{Risk maps for two obstacles with means $\tilde{\mu}_y^{t,k,1}=(3, 2.5)$, $\tilde{\mu}_y^{t,k,2}=(8, 6)$ and covariances $\tilde{\Sigma}_y^{t,k,1}=\mathrm{diag}[0.003, 0.002]$, $\tilde{\Sigma}_y^{t,k,2}=\mathrm{diag}[0.001, 0.004]$ for $\theta=\{0.0001,0.05,0.1\}$ and  $\alpha=0.95$.}
\label{fig:riskmap1}
\end{figure*}

\begin{figure*}[tb]
\centering
\includegraphics[width=6in]{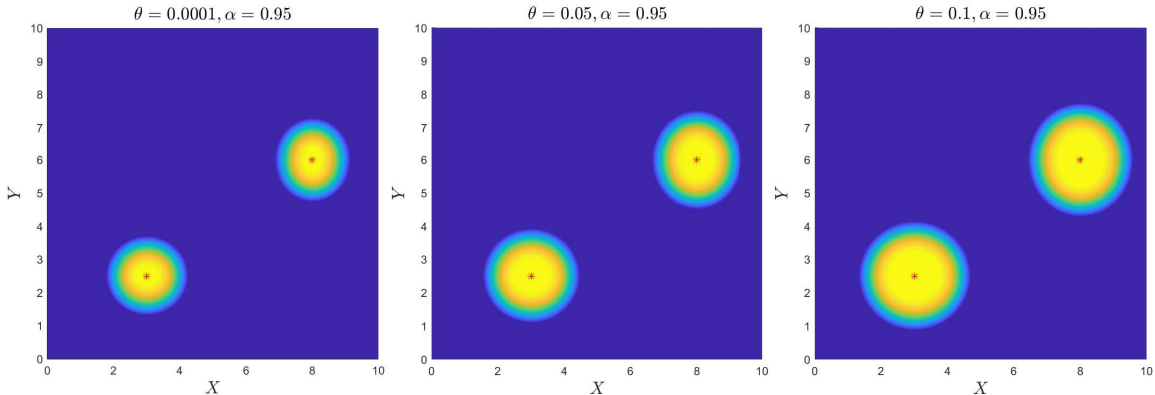}
\caption{Projection of the risk maps onto the robot's configuration space.}
\label{fig:riskmap2}
\end{figure*}

\subsection{Example of DR-Risk Maps}\label{sec:example}

By discretizing the robot's configuration space  and solving either \eqref{DR-CVaR} or \eqref{DR-CVaR_dual} for all discretized points, we can construct the desired DR-risk map~\eqref{mod_DRCVaR}. 
Fig.~\ref{fig:riskmap1} shows examples of such risk maps, which are obtained by solving the primal problem for a risk confidence level $\alpha=0.95$ with two obstacles ($L=2$) at stage $t+k$. In the shown risk maps,  the estimated means and covariances for two obstacles' CoMs are set to $\tilde{\mu}_y^{t,k,1}=[3, 2.5], \tilde{\mu}_y^{t,k,2}=[8, 6]$ and $\tilde{\Sigma}_y^{t,k,1}=\mathrm{diag}[0.003, 0.002], \tilde{\Sigma}_y^{t,k,2}=\mathrm{diag}[0.001, 0.004]$, respectively. Each peak of the risk map is  located at the mean of each obstacle's CoM with a value of $r_\ell^2 = 1$. 
The risk diminishes as the robot moves  away from the obstacle.
Fig.~\ref{fig:riskmap1} demonstrates that  the non-zero area of the risk map expands as the radius $\theta$ increases. 
 Also, the peak area for a bigger radius becomes flatter, meaning that more regions are considered  ``risky". 
 Therefore, the robot's decision using this map will be more robust against errors in the estimated distribution as the Wasserstein ambiguity set gets bigger.
 
Fig.~\ref{fig:riskmap2} shows the projection of the risk map onto the robot's configuration space. It shows that a bigger $\theta$ generates a more conservative risk map.  The risky area enlarges with the size of our ambiguity set. 

For an efficient construction of the risk map, we propose an NN approximation in Section~\ref{sec:app}.
The NN approach avoids any discretization of the robot's configuration space 
or training of multiple networks for different $t, k$ and $\ell$ 
because such dependence is encoded in $y_r (t+k), \tilde{\mu}_y^{t, k, \ell}$ and $\tilde{\Sigma}_y^{t,k,\ell}$ as previously mentioned. 
In the following two sections, we present applications of the DR-risk map to safe motion planning and control in learning-enabled environments.

\section{Application to Learning-Based Distributionally Robust Motion Planning}\label{sec:LBDRMP}

As the first application of the DR-risk maps, we propose a learning-based motion planning algorithm based on RRT*~\cite{karaman2011sampling}. 
 Unlike previous RRT algorithms, our  algorithm
takes into account possible errors in the learned distribution of the obstacles' behaviors.

\subsection{Main Algorithm}

The motion planning algorithm presented in this section is an online sampling-based algorithm for computing a path from the robot's starting point to the goal point in near real-time, taking into account moving obstacles. 
The overall algorithm, similar to the original RRT* algorithm, consists of the nearest neighbor search, steering towards the sampled node, safety check and rewiring. Inspired by~\cite{karaman2011anytime}, the path is generated only for a given time, after which the robot executes the committed trajectory and restarts the planning process from a new initial state, removing unreachable nodes from the tree. The key extension to the original algorithm is the use of the DR-risk maps for safety checks. In addition, the algorithm leverages GPR to infer the future trajectories of the obstacles based on either the system dynamics~\eqref{obs_model} or its approximation~\eqref{app_obs_model}. The risk map in~\eqref{mod_DRCVaR} is employed to guarantee the safety of the derived paths in two stages. First, each node computed in the growing stage of the tree is classified as either safe or unsafe based on the risk value to later include it in or exclude it from the safe subtree. Second, the cost function of planning includes the risk value to escape possibly unsafe nodes. The robot's dynamics~\eqref{sysmod} as well as the constraints on the state and input are incorporated into tree expansion, as the steering towards a sampled node is performed according to the given dynamics by applying control actions that satisfy the constraints. 
Moreover, when changing the parent from one node to another, the feasibility of the trajectories and control actions are checked once again to meet the given requirements.

Our learning-based distributionally robust RRT* (DR-RRT*) algorithm is presented in Algorithm~\ref{DR_Risk_RRT}, given goal state $q_{\mathrm{goal}}$, maximum depth $K$, risk weight constant $w$, other hyper-parameters $\theta, \alpha$ and $r_{\ell}$ for computing risk, as well as the radius $r_{\mathrm{RRT}}$ for neighborhood construction, computed as in~\cite[Theorem 38]{karaman2011sampling}.

At the beginning of the algorithm, $\mathcal{T}$ is set as an empty tree to be expanded later. Initially, the GP dataset $D^{\ell}$ is also an empty set. 
In each iteration, a new safe subtree $\mathcal{T}_{\mathrm{safe}}$ is defined (Line~\ref{alg1:1}). Then, the robot's state   $x_r(t)$ as well as the obstacle's state and action $x_o^{\ell}(t)$ and $u_o^{\ell}(t)$ are observed at current stage $t$, as performed in Line~\ref{alg1:2}. Thereafter, the tree is constructed with $x_r(t)$ as the root (Line~\ref{alg1:3}). Since there might be some nodes that are unreachable from the current state, we remove the corresponding edges and vertices in Line~\ref{alg1:4}. 
These nodes are all nodes that do not root from the current state $x_r(t)$. 
In Line~\ref{alg1:5}, the pruned tree is updated with a new depth value starting from the root, the depth of which is set to $k=0$.

\begin{algorithm}[t]
\SetKw{Input}{Input:}
\Input $q_{\mathrm{goal}},K, \theta,\alpha,r_{\ell},w,r_{\mathrm{RRT}}$\;
$\mathcal{T}=\emptyset,\mathcal{D}^{\ell}\leftarrow \emptyset$\;
\While{$\|\mathrm{Root}(\mathcal{T})-q_{\mathrm{goal}}\|_2 > \epsilon$}{
$t\leftarrow \mathrm{clock}()$\;
$\mathcal{T}_{\mathrm{safe}}\leftarrow \emptyset$\; \label{alg1:1}
Observe $x_r(t)$ and $x^{\ell}_o(t),u^{\ell}_o(t)$ for all $\ell$\;\label{alg1:2}
$\mathrm{Root} (\mathcal{T}) \leftarrow x_r(t)$\;\label{alg1:3}
Remove unreachable nodes from $\mathcal{T}$\;\label{alg1:4}
Reset node depth\;\label{alg1:5}
\For{$\ell=1$ to $L$}{ \label{alg1:6}
$\mathcal{D}^{\ell}_j \leftarrow \mathcal{D}^{\ell}_j \cup \big\{(x_o^{\ell}(t),u_{o,j}^{\ell}(t))\big\},\;j=1,\dots, n_u^{\ell}$\;
GP approximation of $\psi^{\ell}(\mathbf{x})$ via \eqref{mean}--\eqref{cov}\;
$\tilde{\mu}_x^{t,0,\ell}\leftarrow x_{o}^{\ell}(t),\tilde{\Sigma}_x^{t,0,\ell}\leftarrow \mathbf{0}$\;
\For{$k=0$ to $K-1$}{
Compute $\tilde{\mu}_u^{t,k,\ell}$, $\tilde{\Sigma}_u^{t,k,\ell}$ and $\tilde{\Sigma}_{xu}^{t,k,\ell}$ from \eqref{Taylor_app}\;\label{alg1:8}
Update $\tilde{\mu}_y^{t,k+1,\ell}$ and $\tilde{\Sigma}_y^{t,k+1,\ell}$ by \eqref{x_mu_sigma}--\eqref{y_mu_sigma}\;\label{alg1:9}
}}\label{alg1:7}
\For{$\forall q\in \mathcal{T}$ with $\mathrm{Depth}(q)\leq K$}{\label{alg1:10}
$k\leftarrow \mathrm{Depth}(q)$\;
Update  $\mathcal{R}_{t,k}(C_r q)$ by solving \eqref{DR-CVaR}\label{alg1:riskmap}\;
Update $c(q)$ by solving \eqref{cost}\;
\If {$\mathcal{R}_{t,k}(C_r q)\leq \delta$}{
Add $q$ to $\mathcal{T}_{\mathrm{safe}}$\;\label{alg1:11}
}
}
\While{$\mathrm{clock}()\leq \tau$}{\label{alg1:12}
Expand the tree using Algorithm 2}\label{alg1:13}
Plan path $(\mathrm{Root}(\mathcal{T}_{\mathrm{safe}}),q_1,\dots,\dots,q_K)$ in $\mathcal{T}_{\mathrm{safe}}$\;\label{alg1:14}
Drive $x_r(t)$ to $q_1$\;\label{alg1:15}
}
\caption{Learning-based DR-RRT*}\label{DR_Risk_RRT}
\end{algorithm}

Having new perceived information about the obstacles' motions, we perform GPR  in Line~\ref{alg1:6}--\ref{alg1:7}. Here, the GP dataset is updated with new observations, after which the GP approximation of   $\psi^{\ell}(\mathbf{x})$ is updated by learning mean and covariance functions $\mu_u^{\ell,j}(\mathbf{x})$ and $\Sigma_{u}^{\ell,j}(\mathbf{x})$ as in \eqref{mean} and \eqref{cov}. To predict the trajectory of each obstacle starting from $t+1$ to $t+K$, the mean and covariance at $t$ are initialized as the current observation and the zero covariance matrix, respectively. 
In Line~\ref{alg1:8}--\ref{alg1:9}, the mean and the covariance of the obstacle's action, state and output are computed by \eqref{Taylor_app}, \eqref{x_mu_sigma} and \eqref{y_mu_sigma}. Here, $K$ corresponds to the desirable time horizon or, equivalently, the maximum depth of the path.

Using the new prediction results, the safe tree updated in Line~\ref{alg1:10}--\ref{alg1:11} using the nodes of $\mathcal{T}$ satisfying the risk constraint $\mathcal{R}_{t,k} (C_r q) \leq \delta$ with depth less than threshold $K$. This is accomplished by calculating the DR-risk $\mathcal{R}_{t,k}(C_r q)$ for all nodes according to \eqref{mod_DRCVaR}, where the SDP problem \eqref{DR-CVaR} or its dual \eqref{DR-CVaR_dual} needs to be solved for each obstacle. Here, $k$ corresponds to the depth of the node, and therefore the predictions of step $k$ are used to compute the risk for a node of depth $k$. The new value of risk is used to update the costs for the corresponding nodes.

Next, in Line~\ref{alg1:12}--\ref{alg1:13}, we proceed to the expansion of the tree $\mathcal{T}$ for some fixed time $\tau$, where $\mathcal{T}_{\mathrm{safe}}$ is also updated with new nodes. The details of the tree expansion are given in Algorithm~\ref{Exp_Rew} and explained in Section~\ref{sec:tree}.

\begin{algorithm}[t]
\SetKw{Input}{Input:}
\SetKw{Output}{Output:}
\Input $\mathcal{T},\mathcal{T}_{\mathrm{safe}}, t$\;
$q_{\mathrm{rand}}\leftarrow \mathrm{Sample}()$\;\label{alg2:1}
$q_{\mathrm{nearest}}\leftarrow \mathrm{NearestNeighbor}(\mathcal{T}_{\mathrm{safe}},q_{\mathrm{rand}})$\;\label{alg2:2}
$k\leftarrow \mathrm{Depth}(q_\mathrm{nearest})+1$\;\label{alg2:3}
$(q_{\mathrm{new}},c(q_{\mathrm{new}}), \mathcal{R}_{t,k}(C_r q_{\mathrm{new}}))\leftarrow \mathrm{Steer}(q_{\mathrm{nearest}}, q_{\mathrm{rand}})$\;\label{alg2:4}
$\mathcal{N}_{\mathrm{near}} \leftarrow \mathrm{Near}(\mathcal{T}_{\mathrm{safe}}, q_{\mathrm{new}}, r_{\mathrm{RRT}})$\;\label{alg2:5}
$q_{\mathrm{min}}\leftarrow q_{\mathrm{nearest}}, c_{\mathrm{min}}\leftarrow c(q_{\mathrm{new}})$\;\label{alg2:6}
\For{${q}_{\mathrm{near}}\in\mathcal{N}_{\mathrm{near}}$}{
$k \leftarrow \mathrm{Depth}(q_\mathrm{near})+1$\;\label{alg2:7}
$c_{\mathrm{near}}\leftarrow c(q_{\mathrm{near}})+w \mathcal{R}_{t,k}(C_r q_{\mathrm{new}})+\mathcal{L}(q_{\mathrm{near}},q_{\mathrm{new}})$\;\label{alg2:8}
\If{$c_{\mathrm{near}}< c_{\mathrm{min}}$ and $\mathrm{Feas}(q_{\mathrm{near}},q_{\mathrm{new}})$}{
$q_{\mathrm{min}} \leftarrow q_{\mathrm{near}},\;c_{\mathrm{min}}\leftarrow c_{\mathrm{near}}$\;\label{alg2:9}
}
}
$c(q_{\mathrm{new}}) \leftarrow c_{\mathrm{min}}, \mathrm{Parent}(q_{\mathrm{new}}) \leftarrow q_{\mathrm{min}}$\;\label{alg2:10}
$k \leftarrow \mathrm{Depth}(q_\mathrm{new})$\;
Add $q_{\mathrm{new}}$ to $\mathcal{T}$\;
\If{$\mathcal{R}_{t,k}(C_r q_\mathrm{new})\leq \delta$}{\label{alg2:11}
Add $q_\mathrm{new}$ to $\mathcal{T}$\;\label{alg2:12}
}
\For{${q}_{\mathrm{near}}\in\mathcal{N}_{\mathrm{near}}$}{\label{alg2:13}
$k\leftarrow \mathrm{Depth}(q_\mathrm{new})+1$\;
$c_{\mathrm{min}}\leftarrow c(q_{\mathrm{new}})+w \mathcal{R}_{t,k}(C_r q_{\mathrm{near}})+\mathcal{L}(q_{\mathrm{new}},q_{\mathrm{near}})$\;
\If{$c_{\mathrm{min}}\leq c(q_{\mathrm{near}})$ and $\mathrm{Feas}(q_{\mathrm{new}},q_{\mathrm{near}})$}{
$c(q_{\mathrm{near}})=c_{\mathrm{near}}$\;
$\mathrm{Parent}(q_{\mathrm{near}})\leftarrow q_{\mathrm{new}}$\;
Update children nodes of $q_{\mathrm{near}}$ \;
\If{$\mathrm{Depth}(q_{\mathrm{near}})>K$}{\label{alg2:14}
Remove $q_{\mathrm{near}}$ and its children from $\mathcal{T}_{\mathrm{safe}}$\;\label{alg2:15}
}
}
}
\caption{Tree expansion and rewiring}\label{Exp_Rew}
\end{algorithm}

When the planning time is over, the best partial path is retrieved and passed to execution, being constructed from the root of the safe tree towards the goal  (Line~\ref{alg1:14}), where the current state corresponds to $q_0$. The robot follows the path for one step by driving it towards the next state $q_1$ in the planned path (Line~\ref{alg1:15}). The algorithm continues until the distance between the tree root (the current robot state) and the desired $q_\mathrm{goal}$ is no greater than tolerance $\epsilon$.

For real-time execution of the algorithm, it is necessary for the robot to operate while the planning is being performed. This can be achieved by executing Line~\ref{alg1:15} in parallel with the remaining parts of the algorithm. To ensure the termination of the algorithm, the tree will be grown until the planning time reaches $T_s$ seconds. 

\subsection{Tree Expansion and Rewiring}\label{sec:tree}

The tree expansion and rewiring algorithm is given in Algorithm~\ref{Exp_Rew}. Similar to the classical RRT*, the tree is expanded by randomly choosing a point in the configuration space (Line~\ref{alg2:1}). Then, in Line~\ref{alg2:2} the node to be extended is chosen as the minimizer of 
\begin{align*}
c(q,q_{\mathrm{rand}})&=c(q)+\mathcal{L}(q,q_{\mathrm{rand}}),
\end{align*}
where $\mathcal{L}(q,q_{\mathrm{rand}})$ is the length of the path from $q$ to $q_{\mathrm{rand}}$ and  $c(q)$ is the cost of node $q$, defined as
\begin{equation}
c(q)=c(\mathrm{Parent}(q))+w\mathcal{R}_{t,k}(C_r q)+\mathcal{L}(\mathrm{Parent}(q),q).\label{cost}
\end{equation}
The worst-case risk is taken into account in $c(q)$, where the SDP problem is solved for $(t+k)$th prediction performed at current stage $t$ with $k$ being the depth of node $q$.

In Line~\ref{alg2:3} the depth $k$ for the new node is set to the depth of the nearest node incremented by $1$ for computing risk in the next step. The new node $q_{\mathrm{new}}$ is obtained in Line~\ref{alg2:4} by steering the chosen best node towards $q_{\mathrm{rand}}$. Here, the control input is chosen as the one with the least cost $c(q_{\mathrm{new}})$.  The safety risk is given by \eqref{mod_DRCVaR} and computed by solving the SDP \eqref{DR-CVaR} or its dual \eqref{DR-CVaR_dual} for all $\ell=1,\dots,L$.

In Line~\ref{alg2:5}, the neighborhood of $q_{\mathrm{new}}$ is constructed from the nodes in safe subtree $\mathcal{T}_{\mathrm{safe}}$ with distance less than  $r_{\mathrm{RRT}}$ to $q_{\mathrm{new}}$. The best parent of $q_{\mathrm{new}}$ is chosen in Lines~\ref{alg2:6}--\ref{alg2:9}. 
The parent is initialized as $q_{\mathrm{nearest}}$. However, this is changed  if the cost to $q_{\mathrm{new}}$ via $q_{\mathrm{near}}$ is less than the cost via $q_{\mathrm{nearest}}$ and the new path is feasible. 
The node $q_{\mathrm{new}}$ with the updated parent is added to the tree in Line~\ref{alg2:10} only after selecting the parent. 
The subtree $\mathcal{T}_{\mathrm{safe}}$ is also updated  if the risk of the node $q_{\mathrm{new}}$ with depth $k$ is less than the threshold $\delta$ (Line~\ref{alg2:11}--\ref{alg2:12}).

Similar to the original RRT* algorithm,  the rewiring of the neighborhood nodes is performed in Line~\ref{alg2:13}--\ref{alg2:15} after the process of growing the tree is completed. For all $q_{\mathrm{near}}$ in $\mathcal{N}_{\mathrm{near}}$, the cost is calculated taking $q_{\mathrm{new}}$ as parent. 
If the new cost is less than the existing one and the path is feasible, the parent of $q_{\mathrm{near}}$ in both $\mathcal{T}$ and $\mathcal{T}_{\mathrm{safe}}$ is changed to $q_{\mathrm{new}}$.
The costs for $q_{\mathrm{near}}$ as well as its children nodes are updated to take into account the cost for $q_{\mathrm{new}}$. Unlike the original RRT* algorithm, in Line~\ref{alg2:14}--\ref{alg2:15} we also update the safe subtree, where the edge from $q_{\mathrm{new}}$ to $q_{\mathrm{near}}$ is added if the new depth is less than $K$. Otherwise, $q_{\mathrm{near}}$ is removed from the subtree to keep the safe subtree within the maximum depth $K$.

\subsection{Graphical Illustration}

\begin{figure*}[tb]
\centering
\includegraphics[width=\linewidth]{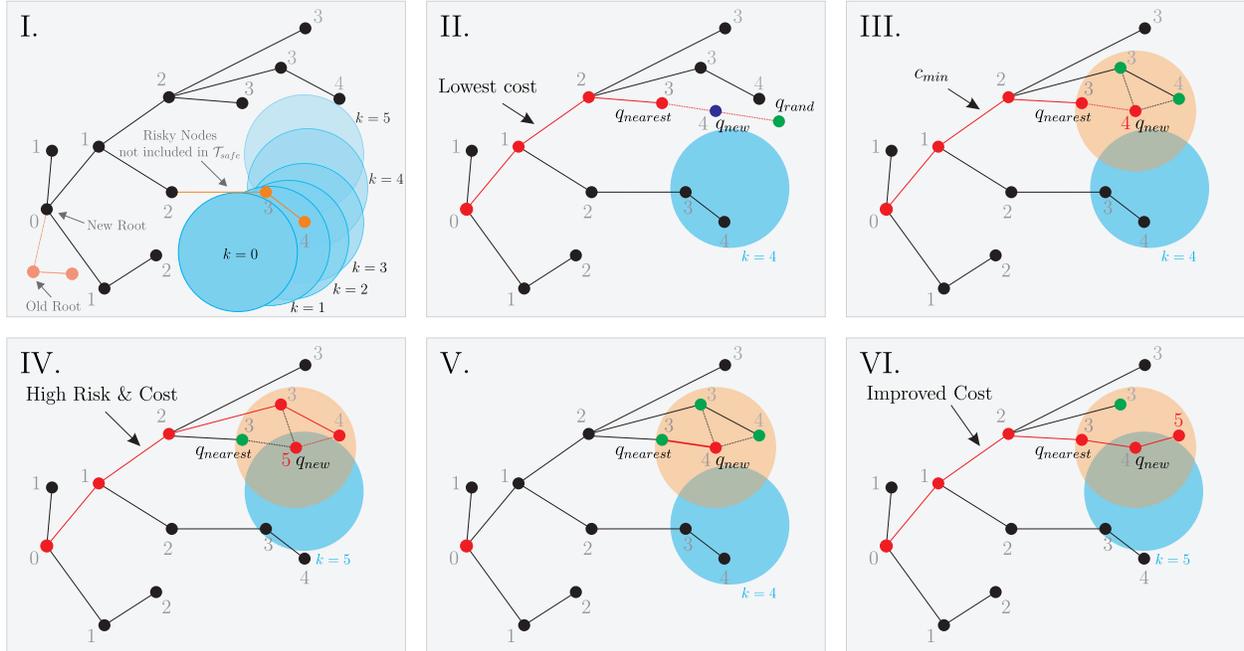}
\caption{Illustrative example of learning-based DR-RRT*. 
The blue ball represents an obstacle (at different time instances) centered at the predicted mean.} 
\label{fig:illustration}
\end{figure*}

A step-by-step example of our algorithm is illustrated in Fig.~\ref{fig:illustration}, where the blue ball represents an obstacle centered at the predicted mean at time steps $k=0,\dots,K$ with $K=5$. In Fig.~\ref{fig:illustration}-I, the robot is steered from the old root to  the new root. Thus, the part of the tree not growing from the new root is pruned.  
The vertices in orange with depth $3$ and $4$  have positive risks with respect to the predicted obstacle's location for $k=3$ and $k=4$, respectively. 
Hence, these nodes are not included in the safe subtree $\mathcal{T}_{\mathrm{safe}}$. 

In Fig.~\ref{fig:illustration}-II,  $q_{\mathrm{rand}}$ is sampled in the configuration space and the corresponding $q_{\mathrm{nearest}}$ is selected from $\mathcal{T}_{\mathrm{safe}}$ with the lowest cost. Finally, $q_{\mathrm{new}}$ is found by steering $q_\mathrm{nearest}$ towards $q_{\mathrm{rand}}$. 
In Fig~\ref{fig:illustration}-III, a ball of neighbors for $q_{\mathrm{new}}$ is created (in orange). This ball includes nodes in green as well as $q_{\mathrm{nearest}}$. The current lowest cost is set to the cost from the root of the tree to $q_{\mathrm{new}}$ via $q_{\mathrm{nearest}}$.

 In Fig~\ref{fig:illustration}-IV, the costs to $q_{\mathrm{new}}$ via other neighboring nodes are computed. 
 It is observed that the length to $q_{\mathrm{new}}$ and the risk are bigger via other neighbors than via $q_{\mathrm{nearest}}$. This is because the depth of $q_{\mathrm{new}}$ changes to $5$ and  the risk is computed for obstacles at $k=5$, whereas in the case of $q_{\mathrm{nearest}}$ being the parent, the depth of $q_{\mathrm{new}}$ is $4$ and the obstacle is farther from the node.
  Therefore, in Fig~\ref{fig:illustration}-V, the parent of $q_{\mathrm{new}}$ is chosen as $q_{\mathrm{nearest}}$. Also, $q_{\mathrm{new}}$ is added to the safe subtree since the risk is non-positive. Fig.~\ref{fig:illustration}-VI illustrates the rewiring process, where the cost for the neighbor node improves when its parent is changed to $q_{\mathrm{new}}$.

Our motion planning method is a learning-based algorithm based on  CC-RRT*~\cite{luders2013robust}, another real-time algorithm for probabilistically feasible motion planning built upon the chance constrained RRT (CC-RRT) algorithm~\cite{luders2010chance} and the original RRT*~\cite{karaman2011sampling}. 
Unlike CC-RRT*, our algorithm first learns the distribution of the obstacles' future trajectories from new observations and replaces the probability of collision by the distributionally robust risk map defined in~\eqref{mod_DRCVaR}. Then, instead of chance constraints,
the DR-risk map is used as a constraint to ensure safety as well as to penalize possibly risky trajectories in the cost function. It is well known that CVaR constraints induce more conservative behaviors compared to chance constraints. Moreover, our DR-risk map yields to take into account possible errors in the learned distribution of the obstacles' behaviors that in practice cannot be captured by CC-RRT*.  As an extension to CC-RRT, distributionally robust RRT (DR-RRT) is introduced in~\cite{summers2018distributionally}, where  
 a moment-based ambiguity set is used unlike our algorithm. 
 The resulting deterministic constraint is similar to the one in CC-RRT* with the difference that it leads to a stronger constraint tightening. 
 On the contrary, our DR-RRT* uses CVaR constraints in addition to the Wasserstein ambiguity set, which inherently takes into account moment ambiguity, thereby providing an additional layer of robustness as mentioned in Appendix~\ref{appendix2}. 
 Furthermore, it is worth mentioning that most motion planning algorithms work only for a restricted set of problems. For example, in both CC-RRT* and DR-RRT, the region occupied by obstacles should be represented by a convex polytope performing an uncertain linear translation, while in both Risk-RRT*~\cite{chi2017risk} and Risk-Informed-RRT*~\cite{chi2018risk} the risk map is constructed as a grid by discretizing the state space.
On the contrary, our method does not impose such restrictions, allowing any obstacle of an arbitrary shape and motion as long as the loss can be constructed as a piecewise quadratic function.

\section{Application to Learning-Based Distributionally Robust Motion Control}~\label{sec:LBDRMC}

In addition to motion planning, our  DR-risk map can be used for motion control in risky environments. 
As the second application, we propose a learning-based motion control technique that limits the risk of collision in a distributionally robust way. 
In this case, our motion controller determines a control input that is robust against errors in learned information about the obstacles' movements.

We formulate the motion control problem as the following MPC problem with DR-risk constraints:
\begin{subequations}\label{DRMPC}
\begin{align}
\min_{\bold{u},\bold{x},\bold{y}}\quad& J(x_r(t),\bold{u}):= \sum_{k=0}^{K-1} c(y_k,u_k)+q(y_K)\\
\textnormal{s.t.}\quad &x_{k+1}=f(x_k,u_k)\label{DRMPCcons1}\\
&y_{k}=C x_k\label{DRMPCcons2}\\
&x_0=x_r(t)\label{DRMPCcons3}\\
&\mathcal{R}_{t,k}(y_k)\leq\delta\label{DRMPCcons4}\\
&x_k\in\mathcal{X} \label{DRMPCcons5}\\
&u_k\in\mathcal{U}\label{DRMPCcons6}
\end{align}
\end{subequations}
where $\bold{x} := (x_0, \ldots, x_K)$, $\bold{u} := (u_0, \ldots, u_{K-1})$,  $\bold{y}:= (y_0, \ldots, y_{K})$ are the robot's predicted  state, input and output trajectories over the prediction horizon $K$. The constraints \eqref{DRMPCcons1} and \eqref{DRMPCcons6} should be satisfied for $k=0,\dots,K-1$, the constraint \eqref{DRMPCcons2} should hold for $k=0,\dots,K$, and the constraints \eqref{DRMPCcons4} and \eqref{DRMPCcons5} are imposed for $k=1,\dots,K$. Here, the stage-wise cost function $c: \mathbb{R}^{n_y} \times \mathbb{R}^{n_u} \to \mathbb{R}$ and the terminal cost function $q: \mathbb{R}^{n_y} \to \mathbb{R}$ are chosen to penalize the deviation from the reference trajectory $y^{ref}$ and to minimize the control effort as follows:
\begin{align*}
c(y_k,u_k)&=\|Q(y_k-y^{ref}_k\|_2^2+\|R u_k\|_2^2\\
q(y_K)&=\|Q_f(y_K-y^{ref}_K)\|_2^2,
\end{align*}
where $Q,Q_f,R\succ 0$ are the state and control weight matrices.
The sets $\mathcal{X}$ and $\mathcal{U}$ represent the state and input constraint sets, respectively, which are assumed to be polyhedra for simplicity. 

The constraint~\eqref{DRMPCcons4} integrates the risk map into the controller synthesis by limiting the DR-risk~\eqref{DR_risk} to  user-specified tolerance level $\delta$. When  $f(x_k,u_k)$ is a linear   function, the DR-MPC problem can be reformulated into a bi-linear SDP by writing the risk constraint in the SDP form~\eqref{DR-CVaR}. However, solving such a problem is a computationally expensive task. 
To alleviate the computational issue, we proposed to approximate the DR-risk map by an NN that can be trained offline.

\subsection{Neural Network Approximation of DR-Risk Map} \label{sec:app}

\begin{figure}
\centering
\includegraphics[width=0.7\linewidth]{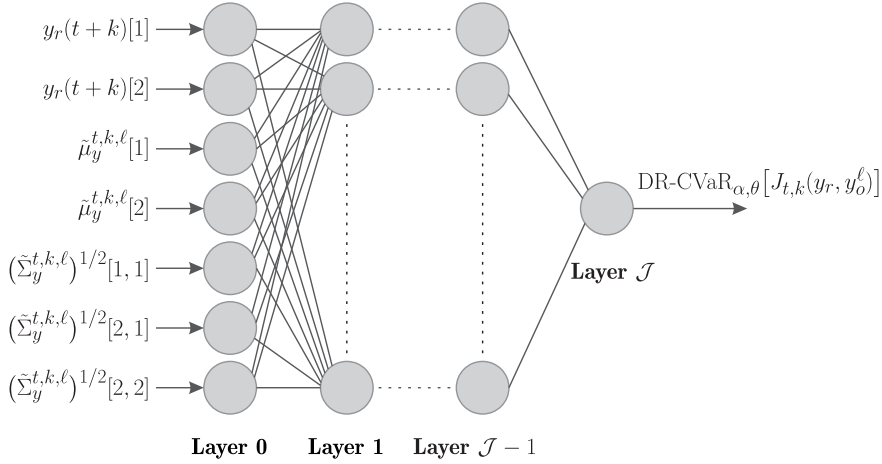}
\caption{Feed-forward NN for approximating the DR-risk map for fixed $\theta$ and $\alpha$.
The inputs are the robot's position $y_r$ and the parameters of the predicted distribution of the obstacles' behaviors $\tilde{\mu}^{t,k,\ell}_y$ and $\mathrm{vech}\big[(\tilde{\Sigma}^{t,k,\ell}_y)^{1/2}\big]$, while the target is the DR-risk. Here, $[i]$ refers to the $i$th entry of a vector, while $[i,j]$ refers to the entry in the $i$th row and the $j$th column of a matrix.}
\label{fig:NN}
\end{figure}

Consider the feed-forward NN in Fig.~\ref{fig:NN} with $\mathcal{J}$ layers and $\mathcal{N}_i$ nodes in each with a ReLU activation function. The inputs of the NN are the robot's position $y_r(t+k)$ and the parameters of the predicted distribution of the obstacles' behaviors, $\tilde{\mu}_y^{t,k,\ell}$ and $(\tilde{\Sigma}_y^{t,k,\ell})^{1/2}$, while the target is the solution of the SDP problem~\eqref{DR-CVaR}.
 
For any  position of the robot and the predicted position of the obstacle, the risk map computed in~\eqref{pos_risk}  can be approximated using the NN as
\begin{equation}\label{risk_NN}
\mathcal{R}^{t,k,\ell}_{NN}(y_r,\mathcal{Y}^{\ell}; \theta,\alpha) = \big(a^{k,\ell}_{\mathcal{J}}+r_{\ell}^2\big)^+,
\end{equation}
where
\begin{align}
&h^{k,\ell}_{i}=\max\{0,a^{k,\ell}_{i}\},\; i=1,\dots,\mathcal{J}-1\label{activ}\\
&a^{k,\ell}_i=W_i h^{k,\ell}_{i-1}+b_i,\; i=1,\dots,\mathcal{J}.\label{a_NN}
\end{align}
Here, $W_{i}\in\mathbb{R}^{\mathcal{N}_i\times \mathcal{N}_{i-1}}$ and $b_{i}\in\mathbb{R}^{\mathcal{N}_i}$ are the weight and bias,  $h^{k,\ell}_i\in\mathbb{R}^{\mathcal{N}_i}$ and $a^{k,\ell}_i\in\mathbb{R}^{\mathcal{N}_i}$ are the output and activation of the $i$th layer with $h^{k,\ell}_0\in\mathbb{R}^{\mathcal{N}_0}$ being the input of the network with $\mathcal{N}_0=n_y(n_y+5)/2$. The activation function in \eqref{activ} follows from the definition of ReLU. 
The input of the network is constructed from the robot's position $y_r(t+k)\in\mathbb{R}^{n_y}$ and the parameters of the predicted distribution of the obstacles' behaviors $\tilde{\mu}^{t,k,\ell}_y$ and $(\tilde{\Sigma}^{t,k,\ell}_y)^{1/2}$ as follows:
\[
h^{k,\ell}_0 = \big[y_r(t+k)^\top, (\tilde{\mu}^{t,k,\ell}_y)^\top, \mathrm{vech}\big[(\tilde{\Sigma}^{t,k,\ell}_y)^{1/2}\big]^\top\big]^\top,
\]
where $\mathrm{vech}[\cdot]$ is an operator vectorizing the lower triangular elements of the matrix.
 Note that the NN is independent of $t$, $k$ and $\ell$ since the dependence is encoded in the input information. 
Therefore, we can use the same NN to approximate the DR-risk maps for all $t$, $k$ and $\ell$. 

To train the NN, a dataset is created by solving~\eqref{DR-CVaR} for  different values of $y_r(t+k), \tilde{\mu}^{t,k,\ell}_y$ and $(\tilde{\Sigma}^{t,k,\ell}_y)^{1/2}$ for fixed $\theta$ and $\alpha$. Thereafter, the NN is trained via backpropagation to approximate the DR-risk map. 
As an example, the mean squared error (MSE) and mean average error (MAE) for all training, validation, and test samples are reported in Table~\ref{Table_NN}, showing that both errors are small.

\begin{table}[!t]
\caption{Mean squared error (MSE) and mean average error (MAE) for the NN approximation of the DR-risk map with 405,000 training, 45,000 validation, and 50,000 test  data points.}
\centering
\setlength{\tabcolsep}{0.1em} 
\begin{tabular}{>{\raggedright}m{0.5cm} | >{\centering}m{2.5cm}| M{3.2cm} M{3.2cm} M{3.2cm}}
\hline
\multicolumn{2}{l}{\bf Radius $\theta$} & $10^{-5}$ & $10^{-3}$ & $10^{-2}$\\
\hline\hline
\multirow{3}{*}{\rotatebox[origin=c]{90}{\centering\bf{MSE}}} & \bf{Train} &  $9.036\times 10^{-7}$ & $2.780 \times 10^{-6}$ & $2.909\times 10^{-6}$\\
\cline{2-5}
& \bf{Validation} & $9.710\times 10^{-7}$ & $2.994\times 10^{-6}$ & $2.569\times 10^{-6}$\\
\cline{2-5}
& \bf{Test} & $9.100\times 10^{-7}$ & $3.343\times 10^{-6}$ & $2.538\times 10^{-6}$ \\
\hline
\multirow{3}{*}{\rotatebox[origin=c]{90}{\centering\bf{MAE}}} & \bf{Train} &  $2.637 \times 10^{-4}$ & $4.449 \times 10^{-4}$ & $4.473 \times 10^{-4}$\\
\cline{2-5}
& \bf{Validation} & $2.808 \times 10^{-4}$ & $3.624 \times 10^{-4}$ & $2.756\times 10^{-4}$\\
\cline{2-5}
& \bf{Test} & $2.806 \times 10^{-4}$ & $3.866 \times 10^{-4}$ & $2.757\times 10^{-4}$\\
\hline
\end{tabular}
\label{Table_NN}
\end{table}

To validate this approach, we compare the DR-risk map and its NN approximation computed using 50,000 random realizations of $y_k$, $\tilde{\mu}_y^{t,k,\ell}\sim\mathcal{U}[0,10]^2$ and $\tilde{\Sigma}_y^{t,k,\ell}\sim\mathcal{U}[0, 0.7]^3$ for $\theta = 10^{-5},10^{-4},10^{-3},10^{-2}$ and $\alpha=0.95$. We also randomly generate the radius $r_\ell\sim\mathcal{U}[0, 0.2]$ and  the risk tolerance level $\delta_{\ell}\sim\mathcal{U}[0,0.5r_{\ell}^2]$ to show the flexibility of our approximation method.
As shown in Table~\ref{Table_safety}, the probability that the approximate risk map reports safe events as unsafe is quite small. 
Furthermore, the approximate risk map is not so conservative since the probability of misreporting unsafe events as safe is also small. 
These results show the validity of our NN approximation approach.

  \begin{table}[!t]
\caption{Validation of the approximate risk map.}
\centering
\setlength{\tabcolsep}{0.1em} 
\begin{tabular}{>{\raggedright}m{2cm} | >{\centering}M{3.4cm}| M{3.4cm}}
\hline
\multirow{2}{*}{\bf Radius $\theta$} & Safe events reported as unsafe & Unsafe events reported as safe\\[0.1cm]
\cline{2-3}
\hline\hline
$10^{-5}$ & $1.5 \times 10^{-3}$ & $4.0  \times 10^{-3}$\\
\cline{1-3}
$10^{-4}$ & $1.4  \times 10^{-3}$ & $1.1  \times 10^{-3}$\\
\cline{1-3}
$10^{-3}$ & $1.3  \times 10^{-3}$ & $1.0 \times 10^{-3}$\\
\cline{1-3}
$10^{-2}$ & $1.2  \times 10^{-3}$ & $8.4 \times 10^{-4}$\\
\hline
\end{tabular}
\label{Table_safety}
\end{table}

\subsection{Approximate Distributionally Robust MPC}

Using the NN approximation of the DR-risk map,
 we eliminate the need to solve the optimization problem~\eqref{DR-CVaR}  in the constraints of the MPC problem~\eqref{DRMPC}. 
 Moreover, since only the inputs $y_r(t+k), \tilde{\mu}_y^{t,k,\ell}$ and $\tilde{\Sigma}_y^{t,k,\ell}$ of the NN depend on the $t, k$ and $\ell$, the same NN can be used for all time stages and obstacles, by simply providing  appropriate inputs to the NN. 
 Therefore, the use of our NN approximation significantly reduces the computational burden required to solve the MPC problem. 
More specifically, we obtain the following approximate MPC problem.

\begin{proposition}
Suppose that the NN approximation~\eqref{risk_NN} of the DR risk map is given for fixed parameters $\theta$ and $\alpha$.
If the risk map in \eqref{DRMPCcons4} is replaced with the NN approximation,
the DR-MPC problem \eqref{DRMPC} can be expressed as follows:
\begin{subequations}\label{A-DRMPC}
\begin{align}
\min \quad  & J(x_r(t),\bold{u}):=\sum_{k=0}^{K-1} c(y_k,u_k)+q(y_K)\\
\textnormal{s.t.}\quad &x_{k+1}=f(x_k,u_k)\label{ADRMPCcons1}\\
&y_{k}=C x_k\label{ADRMPCcons2}\\
&x_0=x_r(t)\label{ADRMPCcons3}\\
&h_0^{k,\ell} = \big[y_k^\top, (\tilde{\mu}_y^{t,k,\ell})^\top, \mathrm{vech}\big[(\tilde{\Sigma}_y^{t,k,\ell})^{1/2}\big]^\top\big]^\top\label{ADRMPCcons4}\\
&W_{\mathcal{J}} h_{\mathcal{J}-1}^{k,\ell}+b_{\mathcal{J}}+r_\ell^2\leq \delta\label{ADRMPCcons5}\\
&h_{i}^{k,\ell}=\lambda^{k,\ell}_{i}+W_{i}h_{i-1}^{k,\ell}+b_i\label{ADRMPCcons6}\\
& h_{i}^{k,\ell} \geq 0, \; \lambda^{k,\ell}_{i}\geq 0 \label{ADRMPCcons7}\\
&(\lambda^{k,\ell}_{i})^\top h_{i}^{k,\ell}=0\label{ADRMPCcons8}\\
&x_k\in\mathcal{X}\label{ADRMPCcons9}\\
&u_k\in\mathcal{U},\label{ADRMPCcons10}
\end{align}
\end{subequations}
where $W_i$ and $b_i$ are the weights and the bias for the $i$th layer. Constraints~ \eqref{ADRMPCcons5}--\eqref{ADRMPCcons8}
are imposed for $i=1,\dots,\mathcal{J}$.
\end{proposition}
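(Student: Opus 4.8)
The plan is to show that the two optimization problems \eqref{DRMPC} (with \eqref{DRMPCcons4} replaced by the NN surrogate) and \eqref{A-DRMPC} share the same optimal value by exhibiting a correspondence between their feasible sets in the original decision variables $(\mathbf{x},\mathbf{u},\mathbf{y})$. The objective $J(x_r(t),\mathbf{u})$ and the constraints \eqref{ADRMPCcons1}--\eqref{ADRMPCcons3}, \eqref{ADRMPCcons9}--\eqref{ADRMPCcons10} are copied verbatim from \eqref{DRMPC}, so the entire argument reduces to showing that, for each prediction step $k=1,\dots,K$ and obstacle $\ell=1,\dots,L$, the single scalar inequality $\mathcal{R}^{t,k,\ell}_{NN}(y_k,\mathcal{Y}^\ell;\theta,\alpha)\le\delta$ is equivalent---after introducing the auxiliary variables $h^{k,\ell}_i$ and $\lambda^{k,\ell}_i$---to the block of constraints \eqref{ADRMPCcons4}--\eqref{ADRMPCcons8}.

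First I would unfold \eqref{risk_NN}, so that the surrogate risk constraint becomes $(a^{k,\ell}_{\mathcal{J}}+r_\ell^2)^+\le\delta$. Since the tolerance satisfies $\delta\ge 0$, the positive part is inert: $(x)^+\le\delta$ iff $x\le\delta$, because $(x)^+=0\le\delta$ when $x\le 0$ and $(x)^+=x$ otherwise. Using $a^{k,\ell}_{\mathcal{J}}=W_{\mathcal{J}}h^{k,\ell}_{\mathcal{J}-1}+b_{\mathcal{J}}$ from \eqref{a_NN} then produces exactly the linear inequality \eqref{ADRMPCcons5}, with the network input pinned to $h^{k,\ell}_0$ by \eqref{ADRMPCcons4}.

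The core step is to replace each ReLU layer \eqref{activ} by an exact complementarity description. For a hidden layer $i$ I would set $\lambda^{k,\ell}_i:=h^{k,\ell}_i-(W_ih^{k,\ell}_{i-1}+b_i)$ and show that $h^{k,\ell}_i=\max\{0,a^{k,\ell}_i\}$ holds if and only if \eqref{ADRMPCcons6}--\eqref{ADRMPCcons8} hold. Given $h^{k,\ell}_i\ge 0$ and $\lambda^{k,\ell}_i\ge 0$, the scalar condition $(\lambda^{k,\ell}_i)^\top h^{k,\ell}_i=0$ is a sum of nonnegative products and hence forces coordinatewise complementarity: in each coordinate either $h=0$ (whence $a=-\lambda\le 0$, so $\max\{0,a\}=0$) or $\lambda=0$ (whence $h=a\ge 0$, so $\max\{0,a\}=a$). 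Conversely, the genuine ReLU output satisfies all four relations by the same case split. Applying this equivalence for $i=1,\dots,\mathcal{J}-1$ recovers precisely \eqref{ADRMPCcons6}--\eqref{ADRMPCcons8}.

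I expect the difficulty here to be one of careful bookkeeping rather than conceptual depth. The three points that must be checked with care are: $(i)$ the positive part may be dropped only because $\delta\ge 0$; $(ii)$ the complementarity triple is an \emph{exact} reformulation of ReLU, not a relaxation, so no spurious feasible points are introduced; and $(iii)$ the auxiliary variables $h^{k,\ell}_i,\lambda^{k,\ell}_i$ can be added and then projected out without altering feasibility in $(\mathbf{x},\mathbf{u},\mathbf{y})$. Once the componentwise equivalence is assembled over all hidden layers and over all $(k,\ell)$, the feasible sets of the two problems coincide in the original variables, which---together with the identical objective---yields the claimed reformulation \eqref{A-DRMPC}.
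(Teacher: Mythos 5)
Your proposal is correct, and its skeleton matches the paper's: isolate the per-$(k,\ell)$ surrogate risk constraint, and encode each ReLU layer exactly by the linear complementarity block \eqref{ADRMPCcons6}--\eqref{ADRMPCcons8}. The difference is in how that encoding is justified. The paper views the ReLU in \eqref{activ} as the Euclidean projection of $a_i$ onto the nonnegative orthant, $h_i=\argmin\{\tfrac12\|x-a_i\|_2^2 \mid x\ge 0\}$, and reads off \eqref{ADRMPCcons6}--\eqref{ADRMPCcons8} as the KKT conditions of this convex program, which are necessary and sufficient; you instead prove the same equivalence by elementary coordinatewise case analysis (with $h,\lambda\ge 0$, the scalar condition $\lambda^\top h=0$ forces per-coordinate complementarity, and each case reproduces $h=\max\{0,a\}$). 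Your route is self-contained and avoids invoking optimality conditions; the paper's route is shorter and extends naturally to activations expressible as projections or proximal maps. You also make explicit two points the paper's proof passes over silently: that the outer positive part in \eqref{risk_NN} may be dropped in \eqref{ADRMPCcons5} precisely because $\delta\ge 0$, and that the auxiliary variables $h_i^{k,\ell},\lambda_i^{k,\ell}$ can be introduced and projected out without altering feasibility in $(\mathbf{x},\mathbf{u},\mathbf{y})$ --- both worth stating, since the first genuinely uses the sign assumption on the tolerance. One small bookkeeping note: your restriction of the complementarity block to $i=1,\dots,\mathcal{J}-1$ is the indexing consistent with \eqref{activ} (the output layer carries no ReLU); the proposition's stated range $i=1,\dots,\mathcal{J}$ merely adds an inert pair $(h_{\mathcal{J}}^{k,\ell},\lambda_{\mathcal{J}}^{k,\ell})$ that appears in no other constraint, so the feasible set in the original variables is unchanged either way.
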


\begin{proof}
Consider the feasible set for constraint \eqref{DRMPCcons4}:
\begin{align*}
FS^k_{true} := &\{y_k \in \mathbb{R}^{n_y} \mid \max_{\ell=1,\dots,L}\mathcal{R}^\ell_{t,k}(y_k,\mathcal{Y}^\ell)\leq \delta\} \\
=&\{y_k \in \mathbb{R}^{n_y} \mid \mathcal{R}^\ell_{t,k}(y_k,\mathcal{Y}^\ell)\leq \delta \; \forall \ell\}.
\end{align*}
Using the NN approximation~\eqref{risk_NN} of the risk map,   the feasible set can be approximated by
\begin{equation}
FS^k_{NN} := \{y_k\in \mathbb{R}^{n_y} \mid \mathcal{R}^{t,k,\ell}_{NN}(y_r,\mathcal{Y}^{\ell}; \theta,\alpha)\leq \delta \; \forall \ell\}.\label{app_feas}
\end{equation}
For fixed $i$, $k$ and $\ell$, the ReLU in \eqref{activ} can be interpreted as projecting $a_{i}$ onto the non-negative orthant, i.e.,
\begin{equation}\label{proj}
h_{i}=\argmin_{x\in\mathbb{R}^{\mathcal{N}_i}} \Big \{ \frac{1}{2}\|x-a_{i}\|_2^2 \mid x\geq 0 \Big \}.
\end{equation}
Since \eqref{proj} is a convex optimization problem, 
$h_{i}=x^*$ and $\lambda_i^*$ are its primal and dual optimal solutions if and only if 
  the following KKT conditions are satisfied:
\begin{equation}
\begin{split}
x^*&=\lambda^*_{i}+a_{i}\\
(\lambda^*_{i})^\top x^* &=0\\
\lambda_{i}^* & \geq 0\\
x^* &\geq 0.
\end{split}\label{KKTcond}
\end{equation}
Replacing constraint \eqref{DRMPCcons4} in the original 
MPC problem
with \eqref{app_feas} and then expressing ReLU \eqref{activ} as \eqref{KKTcond}, we obtain the approximate DR-MPC problem.
\end{proof}

The problem~\eqref{A-DRMPC} can be solved using nonlinear programming algorithms, such as interior-point methods~\cite{wright1997primal,nesterov1994interior,wachter2006implementation}, sequential quadratic programming~\cite{ferreau2014qpoases, powell1978fast,gill2005snopt,leineweber2003efficient}. 
Moreover,  it can also be solved using spatial branch-and-bound algorithms that exploit the bilinear nature of the nonconvex constraint. Similarly, branch-and-bound algorithms~\cite{liberti2008introduction, smith1999symbolic,ryoo1995global, quesada1995global} can be used replacing the nonlinear ineqalities~\eqref{ADRMPCcons4}--\eqref{ADRMPCcons7} with corresponding big-M constraints. 
In this work, for computational efficiency, we employ the interior-point solver implemented in FORCES Pro, which is tailored to efficiently find a local optimal solution for multistage optimization problems~\cite{zanelli2020forces}.

\section{Simulation Results}~\label{sec:result}

In this section, we provide two case studies to demonstrate the performance and utility of our DR-risk map: one for motion planning and another for motion control. All algorithms were implemented in MATLAB and run on a PC with a 3.70 GHz Intel Core i7-8700K processor and 32 GB RAM. The SDP problems~\eqref{DR-CVaR} and~\eqref{DR-CVaR_dual}   were solved using conic solver MOSEK~\cite{aps2019mosek}. In the motion control experiment  the FORCES Pro~\cite{zanelli2020forces} was used to solve the approximate DR-MPC problem.

\subsection{Motion Planning}
As with the first case study, motion planning is performed using our learning-based DR-RRT* in dynamic 2D environments. We consider a car-like robot with the following discrete-time kinematics:
\begin{align*}
\mathrm{x}_r(t+1)&=\mathrm{x}_r(t)+T_s v_r(t)\cos(\theta_r(t))\\
\mathrm{y}_r(t+1)&=\mathrm{y}_r(t)+T_s v_r (t) \sin(\theta_r(t))\\
\mathrm{\theta}_r(t+1)&=\mathrm{\theta}_r(t)+T_s \frac{v_r(t)}{L_r} \tan(\delta_{r}(t)),
\end{align*}
where $\mathrm{x}_r(t), \mathrm{y}_r(t)$ and $\theta_r(t)$ are the states of the vehicle---representing the Cartesian coordinates of the robot's CoM and its heading angle--- and the velocity $\mathrm{v}_r(t)$ and steering angle $\delta_r(t)$ are the control inputs at time step $t$.
The sampling time is
 $T_s=0.1\: sec$, and $L_r=0.8\: m$ is the length of the robot. 
 Note that the robot can be covered by a circle with radius $r_r = 1$.

We consider two different scenarios: $(i)$ a 2D environment with obstacles with unknown dynamics, and $(ii)$ a 2D environment with obstacles with single integrator dynamics. In both cases, the parameters for the risk map are chosen as $\alpha=0.95, r_s = 0.1$ and $r_o^{\ell}=1$ for all $\ell=1,2$, while the maximum depth for the tree is chosen as $K=10$. The control inputs for the robot are limited to $|v_r(t)|\leq 5\: m/v^2$ and $|\delta_r(t)|\leq 30\: deg$. In the beginning of the algorithm, since there are no observations, the GPR dataset $\mathcal{D}^{\ell}$ includes only the current values of the $\ell$th obstacle's states and inputs. New samples are added to the dataset as time goes on. 

\begin{figure}
     \centering
     \begin{subfigure}[b]{0.7\linewidth}
     \centering
     \includegraphics[width=\linewidth]{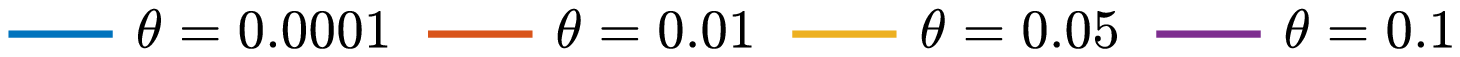}
     \end{subfigure}\\
     \begin{subfigure}[b]{0.2\linewidth}
         \centering
         \includegraphics[trim=0 15 107 0,clip, width=\linewidth]{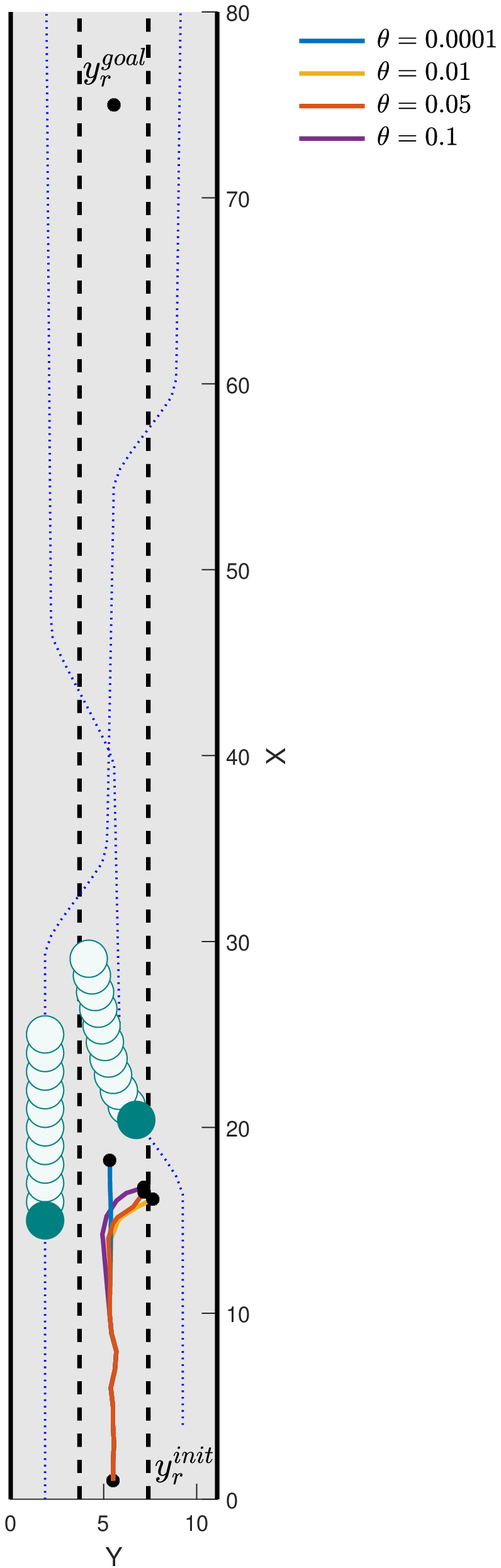}
         \caption{$t=19$}
         \label{fig:Sim1}
     \end{subfigure}%
     \begin{subfigure}[b]{0.2\linewidth}
         \centering
         \includegraphics[trim=0 15 107 0,clip,width=\linewidth]{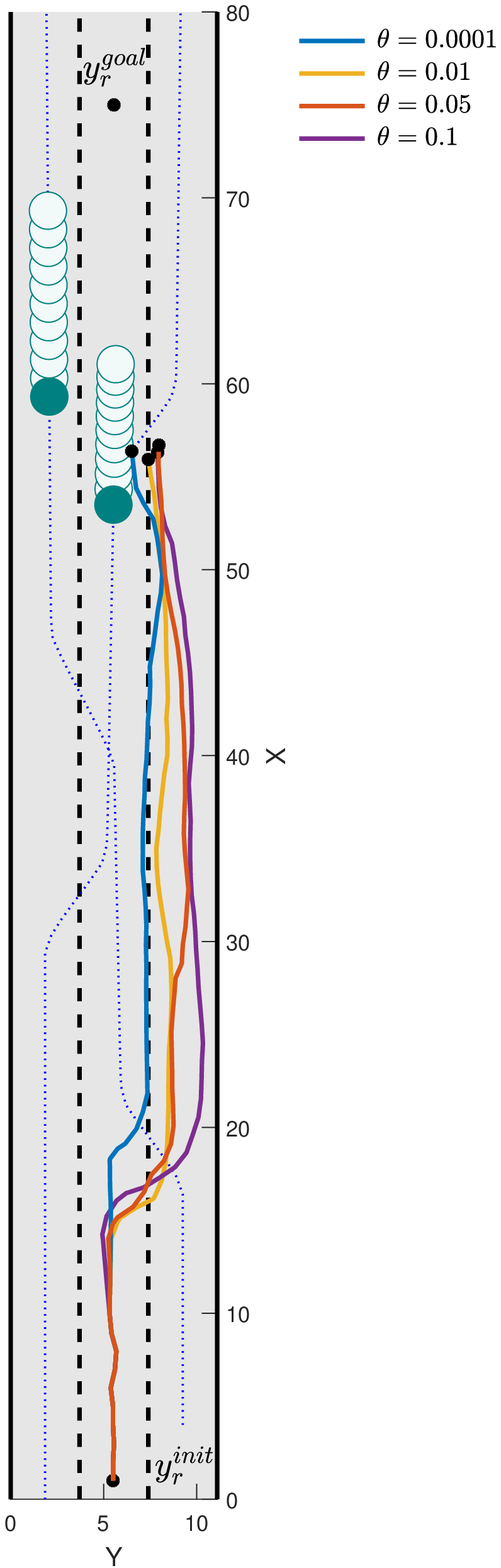}
         \caption{$t=60$}
         \label{fig:Sim2}
     \end{subfigure}%
     \begin{subfigure}[b]{0.2\linewidth}
         \centering
         \includegraphics[trim=0 15 107 0,clip,width=\linewidth]{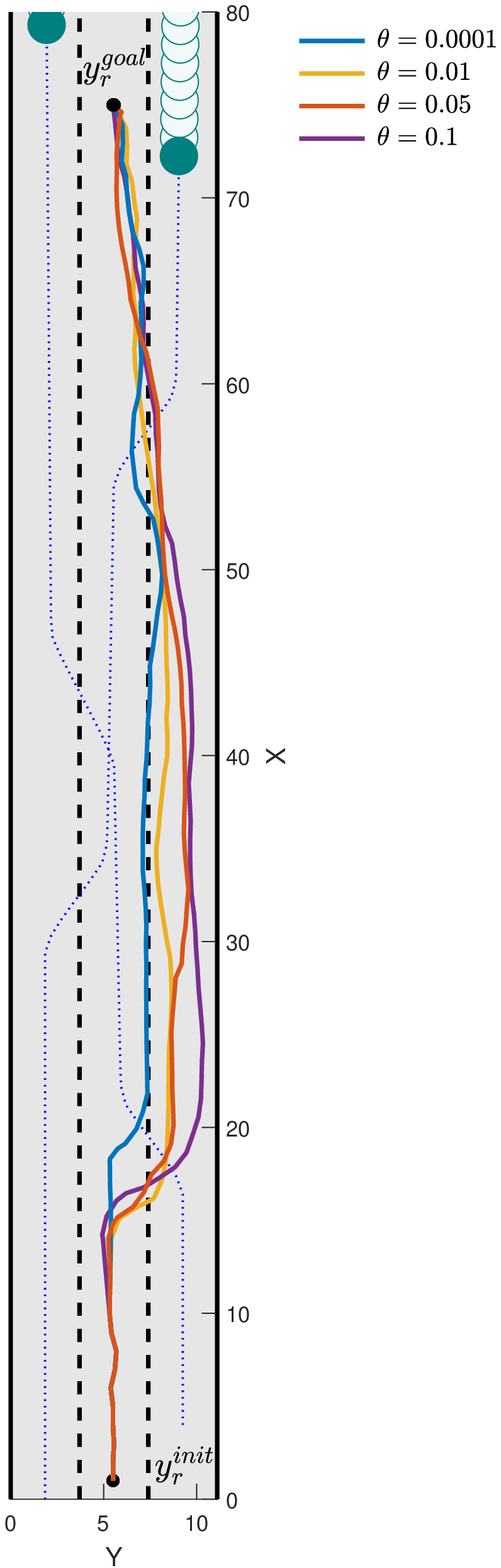}
         \caption{$t=80$}
         \label{fig:Sim3}
     \end{subfigure}
        \caption{Application of learning-based DR-RRT* to a car-like robot on a highway for $\theta=0.0001,0.01,0.05,0.1$. The obstacles are shown in green, while their predicted positions are shown in lighter color.}
        \label{fig:Sims}
\end{figure}

\subsubsection{Highway Scenario}
In the first scenario, the robotic vehicle navigates a highway-like 2D environment  with $L=2$ obstacles with unknown behaviors. We parameterize the dynamics model $\phi^{\ell}$ as described in Appendix~\ref{appendix1} using a previously obtained transition dataset of $10^5$ observations and a feedforward NN with $3$ hidden layers, $20$ neurons in each. The state for each obstacle consists of the Cartesian coordinates of its CoM and the heading angle, while the inputs are its velocity and angular acceleration.

Fig.~\ref{fig:Sims} shows the trajectories generated by learning-based DR-RRT* for  $\theta= 0.0001,0.01,0.05,0.1$ at different time instances, where two obstacles are shown in green. 
The goal point is on the second lane. For this experiment the risk tolerance level $\delta = 0.2205$ is set to be $5\%$ of the maximum possible risk $r_\ell^2 = (r_r + r_o^\ell +r_s)^2$. 
Fig.~\ref{fig:Sims}~(a) presents the situation when the first obstacle changes the lane from the third to the second lane. Since  the obstacle will be on the same lane as the robot according to the prediction, all paths generated by DR-RRT* except for $\theta=0.0001$ choose to move to the third lane. 
The case of $\theta = 0.0001$ is less conservative than the other cases as expected. 

\begin{figure}[t]
\centering
\includegraphics[width=0.3\linewidth, trim=2 117 23 460, angle=270, clip]{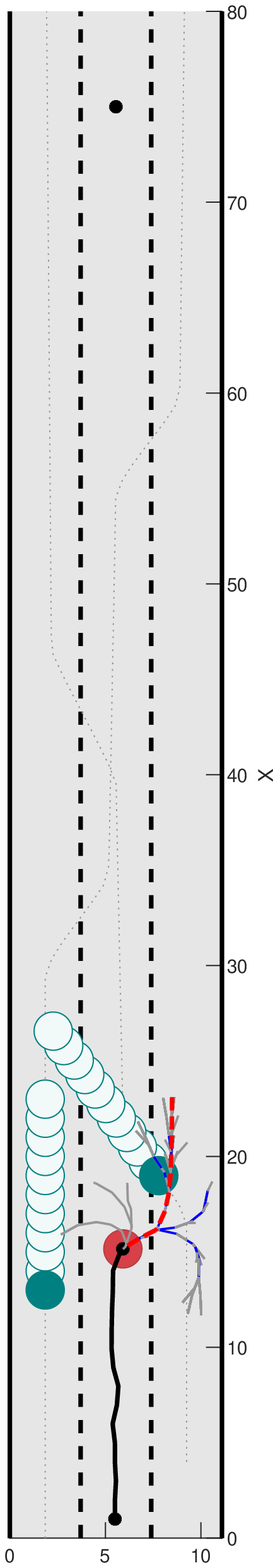}
\caption{Growing process of tree $\mathcal{T}$ (grey) and safe subtree $\mathcal{T}_{\mathrm{safe}}$ (blue) generation. The best path for execution (red) is chosen from $\mathcal{T}_{\mathrm{safe}}$.}
\label{fig:grow}
\end{figure}

After safely avoiding the obstacle, 
the robot needs to switch back to the second lane to reach the goal point. 
As shown in Fig.~\ref{fig:Sims}~(b),
the prediction of another obstacle's future motion indicates that the obstacle will continue following the second lane, while in reality it plans to move to the third lane. 
Since DR-RRT* with $\theta=0.0001$ considers errors in prediction only in a small ball, the robot chooses to overtake the obstacle, performing risky maneuvers. 
Meanwhile, the robot with a larger $\theta$ makes a safer decision, staying in the third lane. In Fig.~\ref{fig:Sims}~(c), all cases reach the desired goal point, completing the algorithm. Overall, it is observed that the case with the smallest radius $\theta=0.0001$ generates the most aggressive (but still safe) path. 
Increasing the radius drives the robot farther away from the obstacles, thereby guaranteeing a safe navigation with enough of a safety margin. 
Clearly, $\theta=0.1$ ensures a larger safety margin compared to the case of $0.01$ or $0.05$.

\begin{table}[!t]
\caption{The total operation cost and collision probability for the highway scenario.}
\centering
\setlength{\tabcolsep}{0.1em} 
\begin{tabular}{>{\raggedright}m{2.6cm}| M{2cm} M{2cm} M{2cm} M{2cm}}
\hline
{\bf Radius $\theta$} & $0.0001$ & $0.01$ & $0.05$ & $0.1$ \\
\hline\hline
\bf{Cumulative Cost} & $3222.64$ & $3224.32$ & $3302.63$ & $3796.14$ \\
\hline
\bf{Collision Probability}  & $0.052$ & $0.034$ & $0.028$ & $0.024$ \\
\hline
\end{tabular}
\label{Table1}
\end{table}

 \begin{figure*}[t]
\centering
     \begin{subfigure}[b]{0.33\linewidth}
         \centering
         \includegraphics[width=\linewidth]{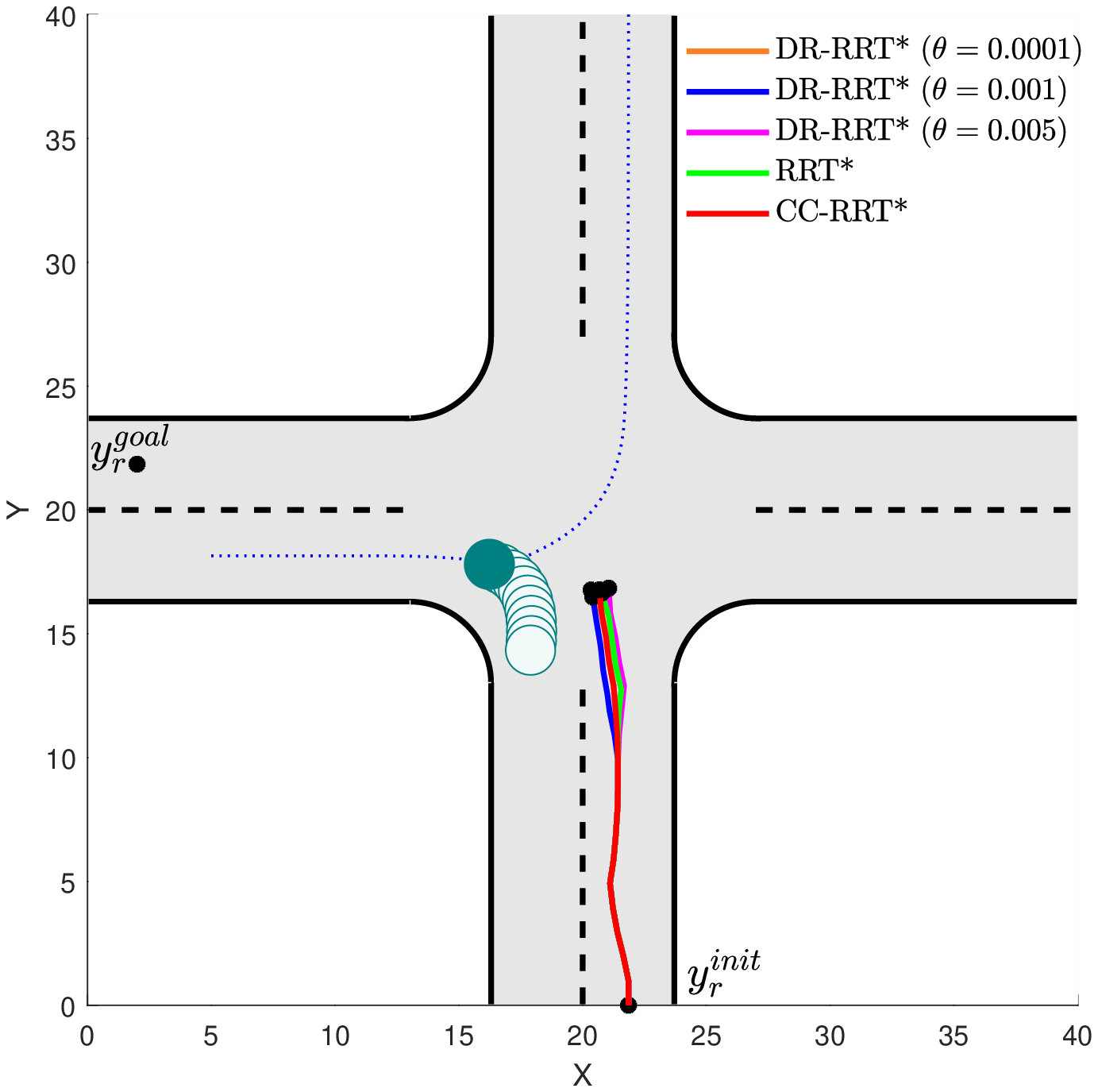}
         \caption{$t=18$}
         \label{fig:Sim_Sc2_1}
     \end{subfigure}%
     \begin{subfigure}[b]{0.33\linewidth}
         \centering
         \includegraphics[width=\linewidth]{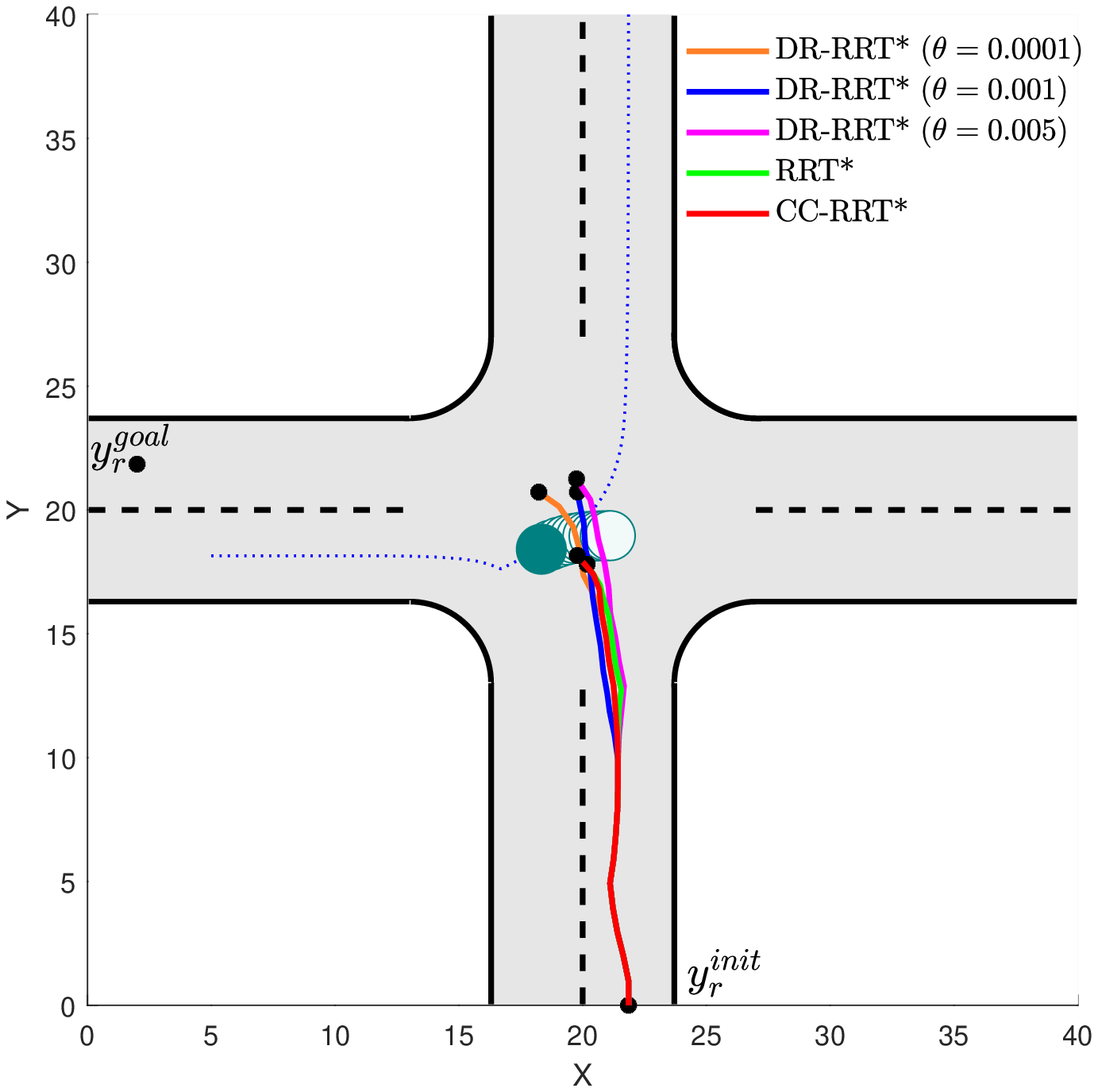}
         \caption{$t=23$}
         \label{fig:Sim_Sc2_2}
     \end{subfigure}%
     \begin{subfigure}[b]{0.33\linewidth}
         \centering
         \includegraphics[width=\linewidth]{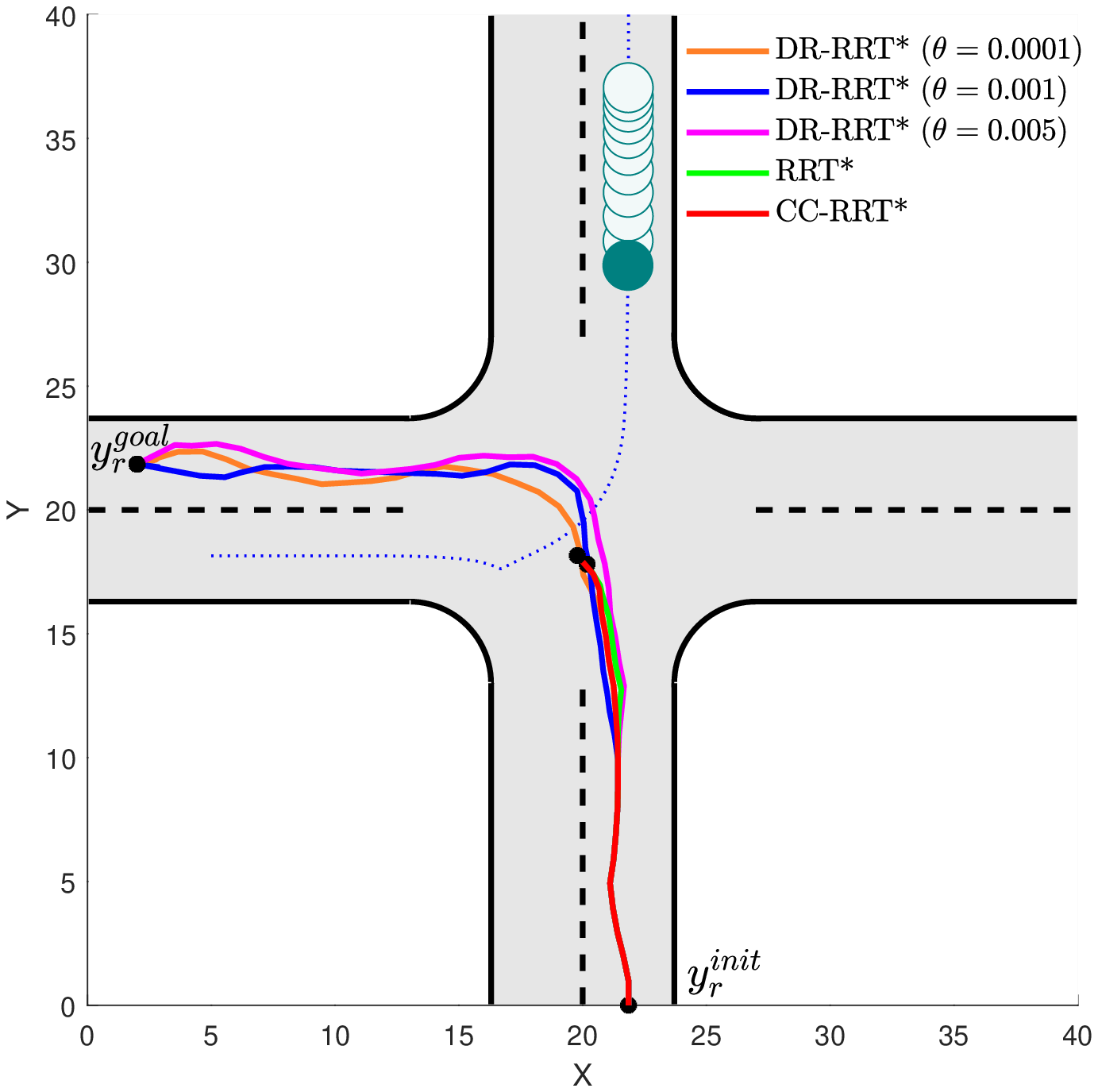}
         \caption{$t=43$}
         \label{fig:Sim_Sc2_3}
     \end{subfigure}
        \caption{Application of learning-based DR-RRT* to a car-like robot in an intersection for $\theta= 0.0001,0.001,0.005$ and comparison with RRT* and CC-RRT*. The obstacle is shown in green, while its predicted positions are shown in lighter color.}
        \label{fig:Sims_Sc2}
\end{figure*}

Fig.~\ref{fig:grow} illustrates how the tree grows at $t=18$ in the case of $\theta=0.01$. The tree starts from the current state of the robot. At the same time, GPR is executed to predict  the obstacles' future motions. Unfortunately, the prediction capability is poor when there are abrupt changes in the behavior of the obstacles. However, the prediction errors are taken into account in our DR-risk map, guaranteeing safety even when the prediction is not accurate. 
The grey tree corresponds to $\mathcal{T}$ obtained using Algorithm~\ref{DR_Risk_RRT}. 
However, to ensure safety, only the nodes with depth less than or equal to $K$ and satisfying the risk constraint are added to the safe subtree $\mathcal{T}_{\mathrm{safe}}$. The best path (in red) given to the robot for execution is then chosen from  $\mathcal{T}_{\mathrm{safe}}$.

Table.~\ref{Table1} shows the cumulative cost of the trajectories generated by DR-RRT* with different $\theta$'s. A bigger radius induces a more conservative behavior, driving the robot away from the shortest path.
Thus,  the total trajectory length and the cost increase with $\theta$.

To examine the robustness of our method, the average probability of collision is computed 
 by adding a random disturbance to the prediction result. 
Specifically, Gaussian noise $\hat{w}_{t,k,\ell}\sim\mathcal{N}(\mathbf{0},\hat{\Sigma}_w^{t,k,\ell})$ with covariance matrix $\hat{\Sigma}_w^{t,k,\ell}$ is added to the obstacles' positions, so that the predicted distribution is perturbed to  $\mathcal{N}(\tilde{\mu}_y^{t,k,\ell}+\hat{w}_{t,k,\ell},\tilde{\Sigma}_y^{t,k,\ell})$, where $\hat{\Sigma}_w^{t,k,\ell}=0.001 I_{n_y}$.
Thereafter, DR-RRT* is performed for $N = 1000$ different realizations of random variable $\hat{w}_{t,k,\ell}$.
The probability of collision is then calculated as the collision rate averaged over $N$ simulations, i.e.,
\[
\mathrm{P}_{\mathrm{coll}} = \frac{1}{N}\sum_{i=1}^N \mathrm{P}_{\mathrm{coll}}^{(i)},
\]
where the collision rate for each $i$th simulation run of length $T$ is computed by
\[
\mathrm{P}_{\mathrm{coll}}^{(i)} = \bigvee_{t=0}^{T}\bigvee_{k=0}^{K}\bigvee_{\ell=1}^{L}\mathbf{1}_{\{\|y_r^{(i)}(t+k)-y^{\ell}_o(t+k)\|<r_{\ell}\}}.
\]
Here, $\bigvee$ means ``Logical Or",
$y_r^{(i)}(t+k)$ is the robot's position at time $t+k$ planned at stage $t$ and $y_o^{\ell}(t+k)$ is the $\ell$th obstacle's actual position at $t+k$. 
The results of our robustness test are reported in Table~\ref{Table1}.
For all $\theta$’s, the collision probability is very small and decreases with the size of the ambiguity set.

 \subsubsection{Road Intersection Scenario}
 
In the second scenario, we consider a road intersection, where an obstacle has an unknown behavior with linear dynamics modeled as a single integrator:
\[
x_o(t+1)=\begin{bmatrix}1 & 0\\0 & 1\end{bmatrix}x_o(t)+\begin{bmatrix}T_s & 0\\0 & T_s\end{bmatrix}u_o (t),
\]
where $x_o(t)$ is the obstacle's position and $u_o (t)$ is the velocity vector in each direction.
This setting allows us to compare our method with other algorithms
that can only handle limited problem classes. 
Specifically, we compare our method to the classical RRT*~\cite{karaman2011sampling} as well as the CC-RRT* algorithm~\cite{luders2013robust}. This comparison is not possible in the first scenario where angular uncertainties are considered in addition to the placement uncertainties; CC-RRT* can only handle the latter. 
In the case of RRT*, we assume that the prediction results are accurate and consider the predicted mean to be the actual obstacle's position, ignoring uncertainties. In the case of CC-RRT*, the obstacle is over-approximated as an octagon, to attain its polytopic representation.
CC-RRT* uses chance constraints assuring that the probability of navigating in the safe set is greater than or equal to $\alpha$.
We set the risk weight in the cost~\eqref{cost} to $w=0$ to ensure the same conditions for all algorithms.

Fig.~\ref{fig:Sims_Sc2} shows the simulation results of DR-RRT* with $\theta= 0.0001,0.001,0.005$ and comparisons to RRT* and CC-RRT* at different time instances. In Fig.~\ref{fig:Sims_Sc2}~(a), the robot reaches the intersection without considering the obstacle, as it is still not interfering with the robot's path. The obstacle is trying to turn right, which is predicted well by GPR. 
However, as shown in Fig.~\ref{fig:Sims_Sc2}~(b), when the robot is trying to steer left,  the obstacle abruptly changes its decision to turn left. This situation is clearly not predicted well by GPR, and therefore RRT* and  CC-RRT* both fail to find a feasible solution. 
However, our DR-RRT* takes into account such an error in the learning result, guiding the robot to avoid a collision. 
Even though DR-RRT* succeeds in generating a collision-free path for all $\theta$'s, 
the path with smaller $\theta$ is riskier than that with a bigger one. With the biggest radius ($\theta=0.005$), the robot avoids the obstacle with a sufficient safety margin. 
Finally, Fig.~\ref{fig:Sims_Sc2}~(c) shows the completed paths generated by DR-RRT*, whereas both RRT* and CC-RRT* fail to complete their paths. 
We can conclude that RRT* is not suitable for motion planning in a highly uncertain environment, while CC-RRT* is applicable if the prediction results are accurate, as it does not consider distributional errors. However, our DR-RRT* is capable of performing safe path planning even with the existence of distributional errors in the learning results. 

Similar to the previous scenario, the probability of collision  is computed  using perturbed predictions with the same perturbation parameters.
Both RRT* and CC-RRT* fail to complete motion planning, and thus the probability of collision for is $1$ for both. In the case of our DR-RRT*, the collision probability is $0$, meaning that there is no collision for all Wasserstein ambiguity sets considered in this specific experiment.

\subsection{Motion Control}

In the second case study, we consider a motion control problem for a service robot in a cluttered environment such as a restaurant. 
The mobile robot is assumed to move according to the following double integrator dynamics:
\[
x_r(t+1)= \begin{bmatrix}1 & 0 & T_s & 0\\0 & 1 & 0 & T_s\\0 & 0 & 1 & 0\\0 & 0 & 0 & 1\end{bmatrix}x_r(t)+\begin{bmatrix}\frac{T_s^2}{2} & 0\\0 & \frac{T_s^2}{2}\\ T_s & 0\\ 0 & T_s\end{bmatrix}u_r(t),
\]
where $x_r(t)=(\mathrm{x}_r(t), \mathrm{y}_r(t), \mathrm{v}_{xr}(t),\mathrm{v}_{yr}(t)) \in \mathbb{R}^4$ is the robot's state  at time $t$, consisting of the Cartesian coordinates of its CoM and the corresponding velocity vector, 
and the input $u_r(t)= (\mathrm{a}_{xr}(t), \mathrm{a}_{yr}(t)) \in \mathbb{R}^2$ is 
chosen as the acceleration vector.
Again, $T_s$ denotes the sampling time, selected as $0.1\; sec$.

 \begin{figure*}[t]
\centering
     \begin{subfigure}[b]{0.35\linewidth}
         \centering
         \includegraphics[width=0.9\linewidth]{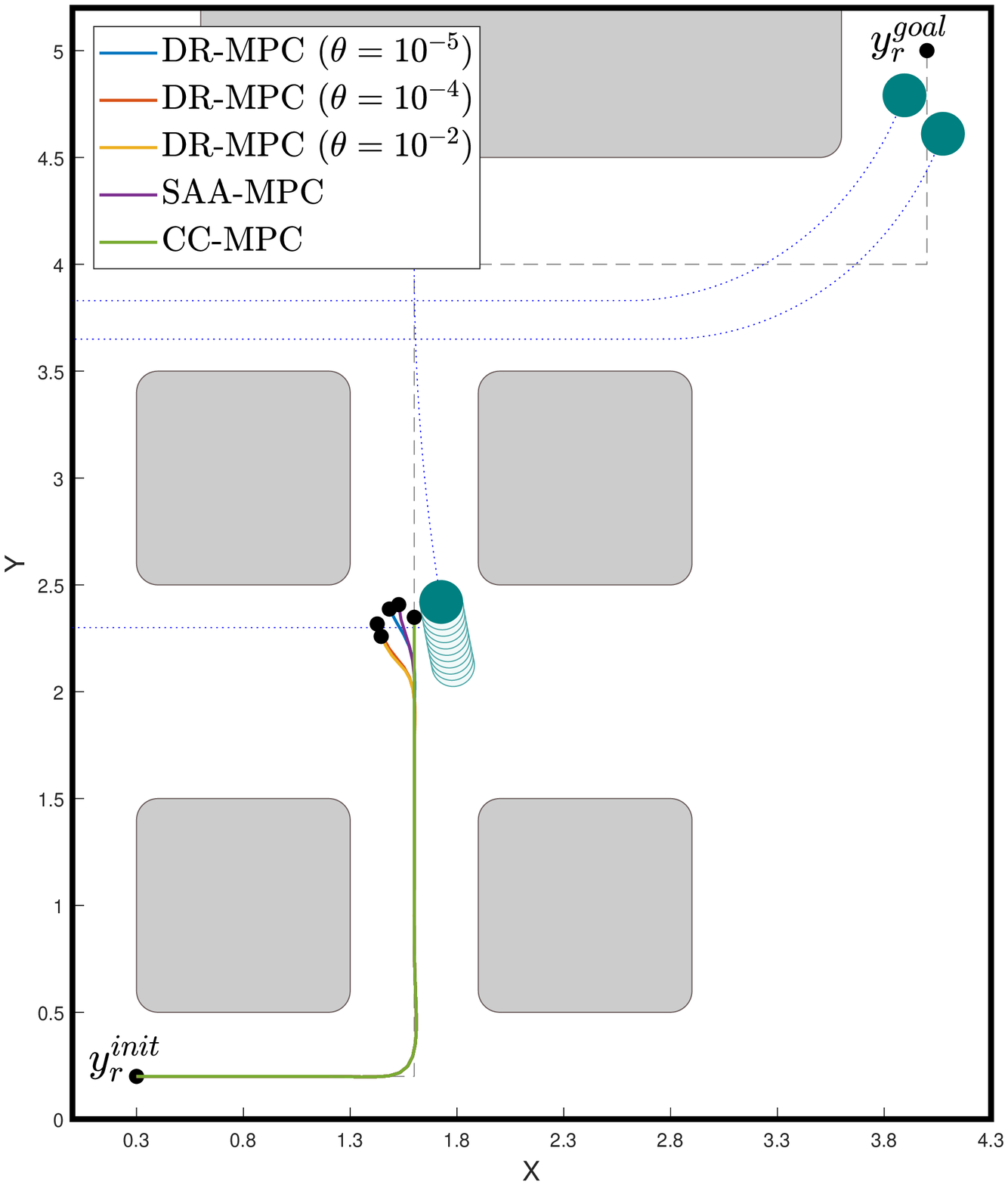}
         \caption{$t=55$}
         \label{fig:Sim_MPC_1}
     \end{subfigure}%
     \begin{subfigure}[b]{0.35\linewidth}
         \centering
         \includegraphics[width=0.9\linewidth]{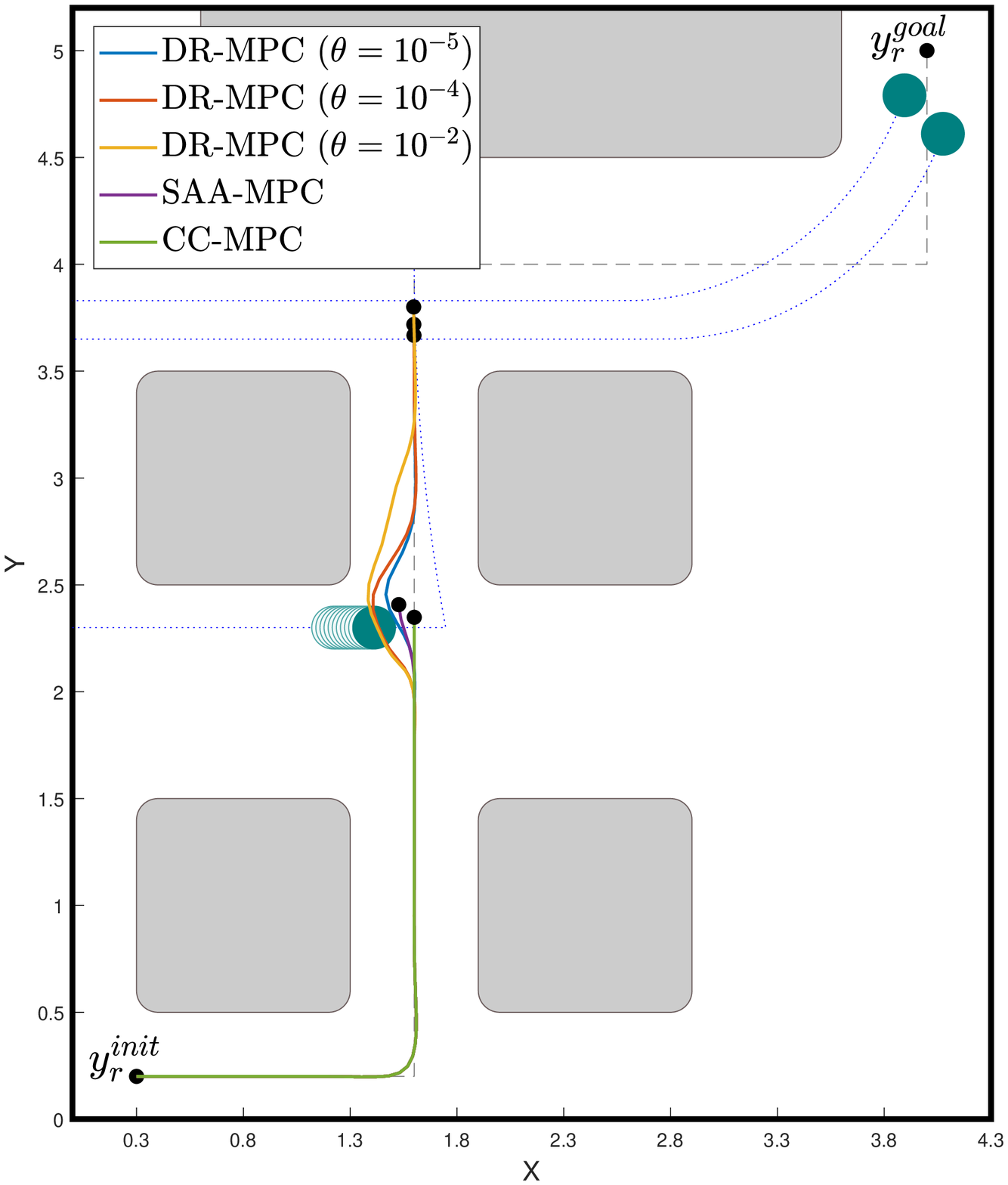}%
         \caption{$t=75$}
         \label{fig:Sim_MPC_2}
     \end{subfigure}\\
     \begin{subfigure}[b]{0.35\linewidth}
         \centering
         \includegraphics[width=0.9\linewidth]{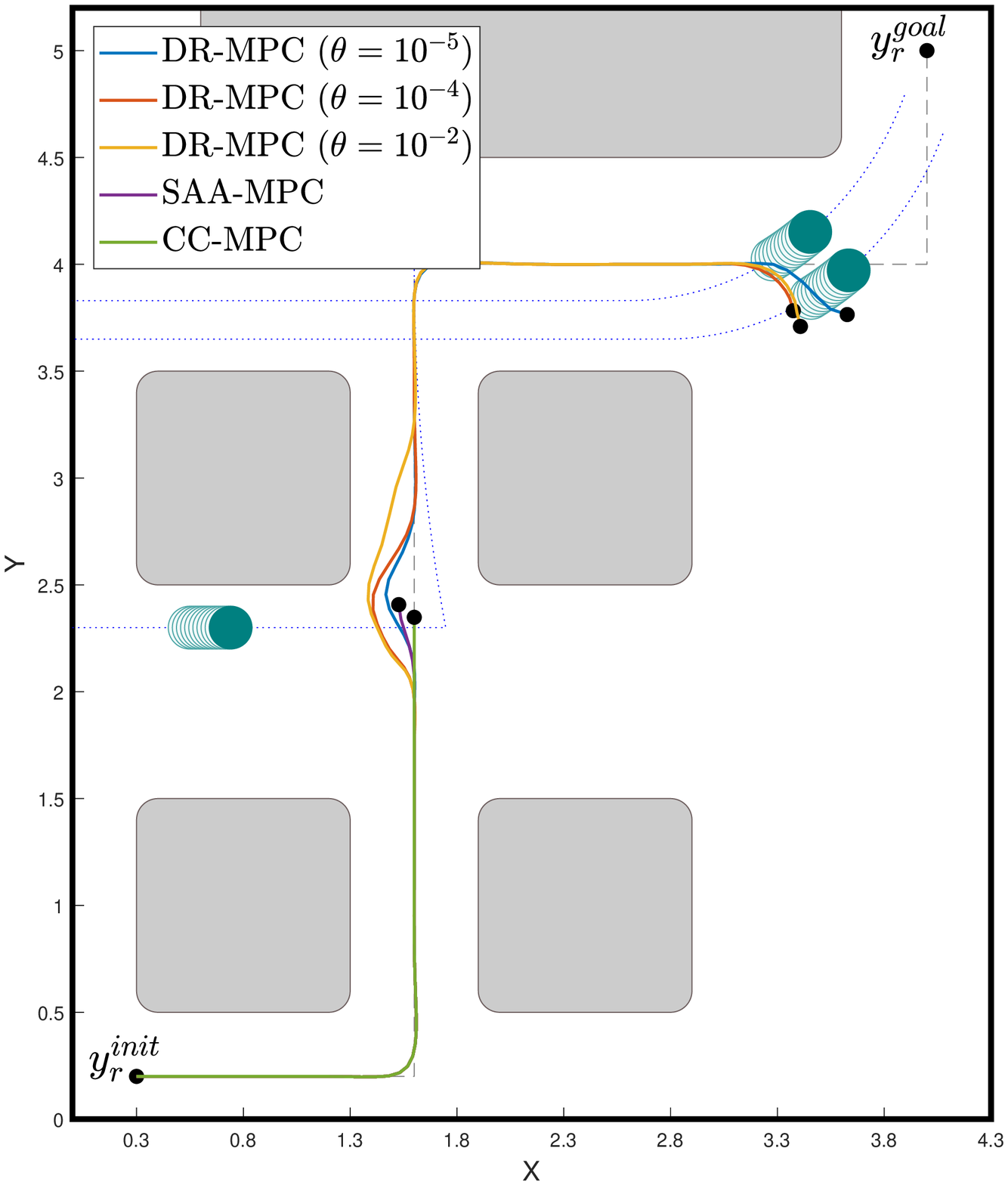}
         \caption{$t=110$}
         \label{fig:Sim_MPC_3}
     \end{subfigure}%
     \begin{subfigure}[b]{0.35\linewidth}
         \centering
         \includegraphics[width=0.9\linewidth]{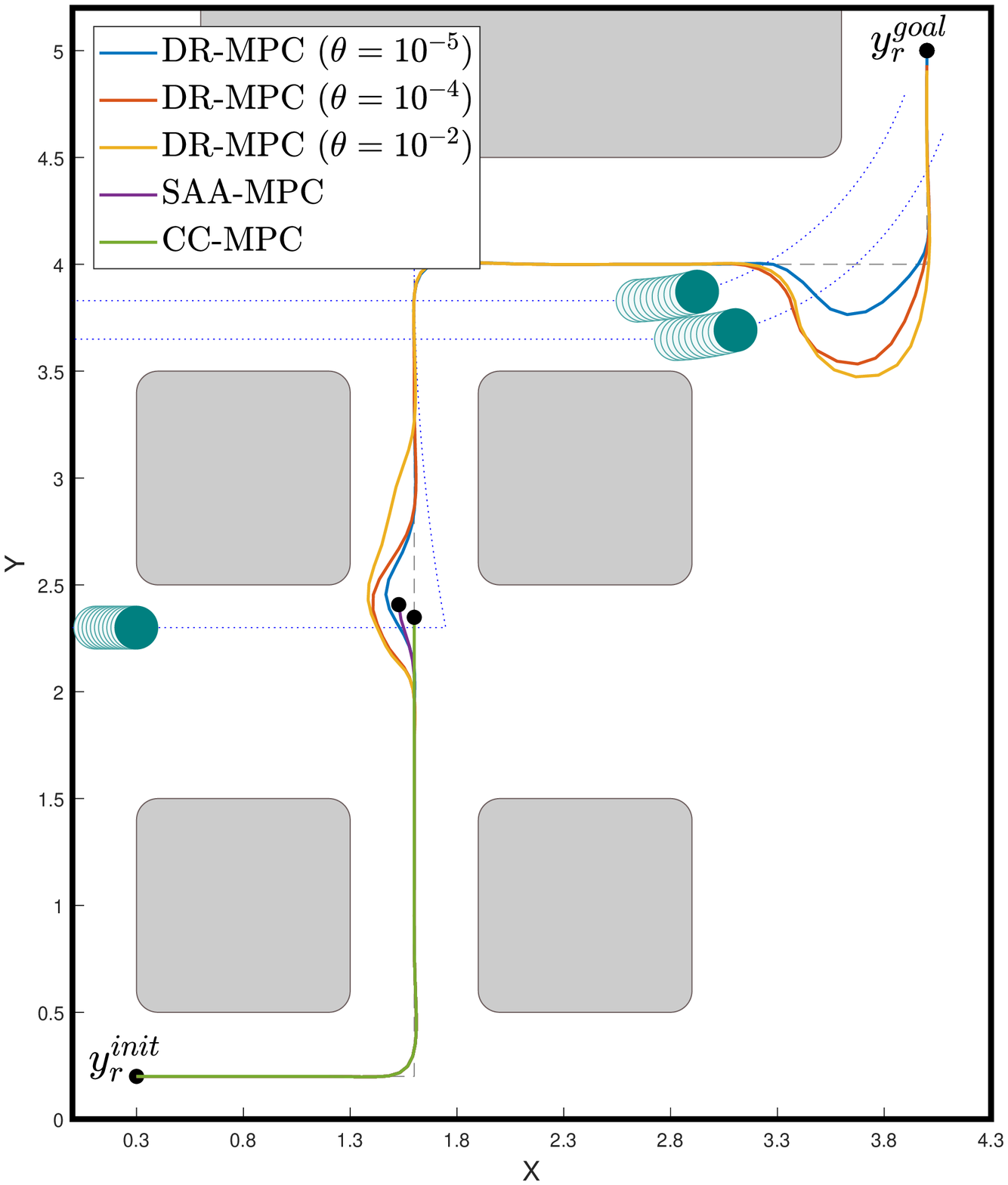}
         \caption{$t=133$}
         \label{fig:Sim_MPC_4}
     \end{subfigure}
        \caption{Application of learning-based DR-MPC to a car-like robot in a cluttered environment for $\theta = \{10^{-5}, 10^{-4}, 10^{-2}\}$, compared against CC-MPC and SAA-MPC with $N=100$. The obstacles are shown in green, while predictions for the corresponding obstacle are in lighter color.}
        \label{fig:Sims_MPC}
\end{figure*}

The circular robot of radius $r_r = 0.09$ aims to track a given reference trajectory in a cluttered 2D environment with some static and dynamic obstacles that may represent other service robots or human agents. Each of  $L=3$ dynamic obstacles is a circular object of radius $r_o^{\ell} = 0.1$, and the safety margin is set to be $r_s = 0.01$. 
The control input for the robot is limited to lie in $\mathcal{U}:=\{u \in \mathbb{R}^2 \mid \|u\|_{\infty} \leq 4\}$, while its state is restricted to $\mathcal{X}:=\{x \in \mathbb{R}^4 \mid (0,0,-2,-2) \leq x \leq (6,6,2,2)\}$. 
Each of the $L_\mathrm{stat}=5$ static obstacles is approximated by an ellipsoid, defined as $\mathcal{O}^i_\mathrm{stat}:=\{x\in\mathbb{R}^2\mid (x - x_\mathrm{stat}^i)^\top P_i^{-1} (x-x_\mathrm{stat}^i) \leq 1\}$, where $x_{\mathrm{stat}}^i$ is the center of $i$th elliptical obstacle and $P_i = P_i^\top \succ 0$ determines how far the ellipsoid extends in every direction from $x_\mathrm{stat}^i$. The following additional constraints are added to problem~\eqref{A-DRMPC} to avoid  the static obstacles:
\begin{align*}
(y_k - x_\mathrm{stat}^i)^\top P_i^{-1} (y_k - x_\mathrm{stat}^i)\geq 1 \quad  \forall i=1,\dots,L_\mathrm{stat}.
\end{align*}

The NN approximation of the DR-risk map is performed as described in Section~\ref{sec:app}. We uniformly sample 500,000 different values of $y_r(t+k)$ and $\tilde{\mu}_{y}^{t,k,\ell}$ from $\mathcal{U}[0,6]^2$ and $\mathrm{vech}\big[(\tilde{\Sigma}_y^{t,k,\ell})^{1/2}\big]$ from $\mathcal{U}[0,0.1]^3$ and divide them into training, validation and testing datasets with a ratio of $0.8:0.1:0.1$.

We begin the MPC algorithm by applying GPR to predict the mean $\tilde{\mu}_y^{t,k,\ell}$ and covariance $\tilde{\Sigma}_y^{t,k,\ell}$ for all dynamic obstacles $\ell=1,\dots,L$ for future time steps $k=1,\dots,K$ based on the latest $M=10$ observations of the obstacles' behaviors. 
This step is repeated in every time stage $t$ before solving the optimization problem~\eqref{A-DRMPC}.

We compare the performance of our approximate DR-MPC~\eqref{A-DRMPC} with that of the CVaR-constrained sample average approximation MPC (SAA-MPC)~\cite{hakobyan2019risk} with 
$N=100$ sample data generated from the predicted distribution, as well as the chance constrained MPC (CC-MPC) for elliptical obstacles~\cite{zhu2019chance}. 
The risk confidence level is chosen as $\alpha=0.95$.
For SAA-MPC and DR-MPC, the risk tolerance level $\delta = 4\times 10^{-4}$ is set to be $1\%$ of the maximum possible risk $r_\ell^2 = (r_r + r_o^\ell +r_s)^2$. In  our approximate DR-MPC, 
the radius  is chosen as $\theta = 10^{-5}, 10^{-4}, 10^{-2}$.

Fig.~\ref{fig:Sims_MPC} shows the simulation results for the three MPC methods with prediction horizon $K=10$. In both SAA-MPC and CC-MPC, the GPR prediction results are used for risk assessment. However, due to some sudden and unpredictable movements of the obstacles, the GPR results are not trustworthy.
As shown in Fig.~\ref{fig:Sims_MPC}~(a),  the mobile robot follows the reference trajectory and approaches the first dynamic obstacle. In this stage, all controllers try to avoid the obstacle by passing it on the left with different safety margins. 
However, even though CC-MPC finds a feasible solution under the inaccurately predicted distribution, collision occurs in reality due to the prediction and approximation errors.
Similarly, after a few steps, the robot controlled by SAA-MPC collides with the obstacle. 
Unlike the two controllers, DR-MPC controls the robot to safely avoid the obstacle and continue  following the reference trajectory despite the inaccurate GPR results. This is because, instead of directly using the learned distribution, DR-MPC considers the risk of unsafety with respect to the worst-case distribution within distance $\theta$ from the learned one.
This is shown in Fig.~\ref{fig:Sims_MPC}~(b), where the robot has already passed the obstacle. 
The radius $\theta$ affects the behavior of the robot in a way that increasing it results in a more risk-averse steering behavior. 
In particular, DR-MPC with $\theta=10^{-2}$ generates the most conservative trajectory, while the trajectory for $\theta=10^{-5}$ is the least safe, being close to that generated by SAA-MPC. 
This is because, as $\theta\to 0$, the ambiguity set vanishes and DR-CVaR reduces to CVaR. 
In Fig.~\ref{fig:Sims_MPC}~(c), the robot approaches the third and fourth dynamic obstacles. 
Similar to the previous situation, DR-MPC guides the robot to safely avoid the obstacles with some safety margins depending on the size of the ambiguity set. 
Finally, as shown in Fig.~\ref{fig:Sims_MPC}~(d),
the robot controlled by our DR-MPC method successfully reaches the goal point, 
unlike the other two methods.

The cumulative costs incurred by the three methods are reported in Table~\ref{Table2}.\footnote{In the cases of CC-MPC and SAA-MPC, we
continued to perform motion control even after collisions.}
Obviously, the cost  increases as the controller becomes more conservative,  as the robot drives away from the obstacles with larger safety margins.

Table~\ref{Table2} also shows
the probability of collision,  averaged over $500$ simulations by  perturbing the distribution predicted by GPR as in the motion planning case. 
CC-MPC has the highest probability of collision,
followed by SAA-MPC.
This is justified by the fact that chance constraint can be equivalently expressed using value-at-risk (VaR), while SAA-MPC uses CVaR. 
By definition, it holds that $\mathrm{VaR}[X]\leq\mathrm{CVaR}[X]$, and therefore the CVaR-based SAA-MPC induces more conservative behavior compared to CC-MPC. 
Our DR-MPC reduces the collision probability to $0.039$ even with a very small ambiguity set ($\theta=10^{-5}$).
Increasing the radius to $\theta=10^{-2}$ further reduces the probability of collision with the obstacles to $0.001$.

The computation time reported in Table~\ref{Table2} is measured from the starting point to the goal point. 
The results show  that 
CC-MPC and DR-MPC with $\theta=10^{-5}$ take a similar amount of time to complete motion control, while SAA-MPC is slightly slower due to the number of constraints in the optimization problem for each sample. 
As for the remaining $\theta$'s, increasing the safety of the robot is comparatively computationally heavy as finding a feasible trajectory satisfying the risk constraints becomes more time consuming. 
From these results, we can conclude that it is reasonable to use $\theta = 10^{-4}$ in this problem, which produces a sufficiently robust behavior  with moderate operation cost and computation time.

\begin{table}[!t]
\caption{Total operation cost, collision probability, and
total computation time   for CC-MPC, SAA-MPC, and DR-MPC.}
\centering
\setlength{\tabcolsep}{0.1em} 
\begin{tabular}{>{\raggedright}m{2.7cm}| M{2.3cm}| M{2.3cm}| M{2.3cm} M{2.3cm} M{2.3cm}}
\hline
&  \multirow{2}{*}{CC-MPC} &  \multirow{2}{*}{SAA-MPC} & \multicolumn{3}{c}{DR-MPC ($\theta$)}  \\
 \cline{4-6}
 & & & $10^{-5}$ & $10^{-4}$ & $10^{-2}$ \\
\hline\hline
\bf{Cumulative Cost} & $1.245$ & $3.665$ & $5.707$ & $18.430$ & $30.681$ \\
\hline
\bf{Collision Probability}  &$1$ & $0.056$ & $0.034$ & $0.005$ & $0.001$ \\
\hline
\bf{Computation Time} ($sec$) & $63.082$ & $71.513$ & $64.786$ & $69.494$ & $74.856$\\
\hline
\end{tabular}
\label{Table2}
\end{table}

\section{Conclusions}

We have proposed a novel risk assessment tool, called the DR-risk map, for a mobile robot in a cluttered environment with moving obstacles. 
Our risk map is robust against distribution errors in the obstacles' motions predicted by GPR.
For computational tractability, an SDP formulation was introduced along with its dual SDP. 
The utility of the risk map was demonstrated through its application to motion planning and control. 
The DR-RRT* algorithm uses the DR-risk map in the cost and constraint to generate a safe path in the presence of learning errors. 
Furthermore, to reduce the computational cost, 
 an NN approximation of the risk map was proposed and embedded into our MPC problem for motion control. 
The results of our simulation studies demonstrate the capability of the DR-risk map to preserve safety under learning errors.

It remains as future work to use the DR-risk map in other important problems such as safe learning for robotic systems and risk-sensitive reinforcement learning. 
Another interesting direction is to enhance the adaptivity of the DR-risk map by updating its conservativeness in an online manner depending on the observed safety margin.

\appendix
\section{Neural Network Approximation of Obstacle Dynamics}\label{appendix1}
As mentioned in Section~\ref{sec:MRO},  the system model of obstacles might be unknown in practice. However, with some observation data,
an approximate model $\phi_w$  of $\phi$ can be constructed using NNs. 
In this work, we use feedforward NNs with ReLU activation functions and $\mathcal{L}_{\phi}$ hidden layers to approximate the obstacles' dynamics.
The input of the NN consists of the obstacles' state and action vectors at each time stage.
The target of the NN is chosen as the difference between the next state and the current state to take advantage of the discrete nature of the dynamics.
The training dataset is collected through the observation of $N_{\phi}$ random transitions $(x_o(t),u_o(t),x_o(t+1))$,
\begin{align*}
D_\mathrm{in}^{\ell}=\big\{&(x_o(t),u_o(t))\big\}_{t=0}^{N_\phi-1}\\
D_\mathrm{tar}^{\ell}=\big\{&x_o(t+1)-x_o(t)\}_{t=0}^{N_\phi-1},
\end{align*}
where $D_\mathrm{in}^{\ell}$ and $D_\mathrm{in}^{\ell}$ represent the input and target datasets, respectively. 
Given the datasets, the NN $\hat{\phi}_w$ is trained  by minimizing the mean squared error:
\[
L_{\phi}(w) = \sum_{t=0}^{N_{\phi}-1} \frac{1}{2} \|\hat{\phi}_{w}(x_o(t),u_o(t)) - (x_o(t+1)-x_o(t))\|^2,
\]
where the parameter vector $w$ represents the network weights.
As a result of optimization, we obtain the following approximate model for obstacle dynamics:
\begin{equation}
\phi_w(x_o(t),u_o(t))=x_o(t)+\hat{\phi}_w(x_o(t),u_o(t)),\label{app_obs_model}
\end{equation}
which replaces the function $\phi$ in the obstacle dynamics \eqref{x_mu_sigma}.

\section{Proofs}\label{appendix2}

\subsection{Proof of Theorem~\ref{thm:sdp}}

\begin{proof}
We use the definition of CVaR to wrtie the DR-risk as follows:
\begin{align*}
\textrm{DR-CVaR}_{\alpha,\theta}\big[J(y_r,y_o)\big]
&=\sup_{\mathrm{Q}\in\mathbb{D}} \inf_{z\in\mathbb{R}}\bigg ( z+\frac{1}{1-\alpha}\mathbb{E}^{\mathrm{Q}}\big[\big(J (y_r,y_o ) -z\big)^+\big] \bigg )\\
&\leq \inf_{z\in\mathbb{R}} \bigg ( z+\frac{1}{1-\alpha}\sup_{\mathrm{Q}\in\mathbb{D}}\mathbb{E}^{\mathrm{Q}}\big[\big(J (y_r,y_o )-z\big)^+\big] \bigg ),  \nonumber
\end{align*}
where the inequality follows from the minimax inequality.

 Consider the following convex uncertianty set, which is the projection of $\mathbb{D}$ onto the space of means and covariances:
\begin{equation}
\begin{split}
\mathcal{U}_\theta(\tilde{\mu},\tilde{\Sigma})=\Big\{(\mu,\Sigma)&\in\mathbb{R}^{n_y}\times \mathbb{S}^{n_y}_+ \mid \|\mu-\tilde{\mu}\|_2^2+B^2(\Sigma,\tilde{\Sigma})\leq \theta^2\Big\}.
\end{split}\label{Utheta}
\end{equation}
The uncertainty set $\mathcal{U}_{\theta}(\tilde{\mu},\tilde{\Sigma})$ is convex and compact since it is the projection of the Wasserstein ball.
We now leverage the Gelbrich hull, defined in~\cite{kuhn2019wasserstein}, which contains all distributions supported on $\Xi$ whose mean and covariance fall into the uncertainty set $\mathcal{U}_\theta(\tilde{\mu},\tilde{\Sigma})$. 
In our case, since we consider two normal distributions, the Gelbrich hull is identical to the Wasserstein ball $\mathbb{D}$ defined in~\eqref{WBall}. Due to nonlinearity of covariance matrix in the underlying distribution, it is reasonable to perform change of variables and represent the uncertainty set $\mathcal{U}_{\theta}(\tilde{\mu},\tilde{\Sigma})$ by the second-order moment $M=\mathbb{E}[y_o y_o^\top] = \Sigma+\mu\mu^\top$. Then the new uncertainty set $\mathcal{V}_\theta(\tilde{\mu},\tilde{\Sigma})$ will be defined as:
\begin{equation} \nonumber
\begin{split}
&\mathcal{V}_{\theta}(\tilde{\mu},\tilde{\Sigma}) = \big\{(\mu,M)\in\mathbb{R}^{n_y} \times \mathbb{S}_+^{n_y} \mid (\mu,M-\mu\mu^{\top})\in \mathcal{U}_{\theta}(\tilde{\mu},\tilde{\Sigma}) \big\},
\end{split}\label{Vtheta}
\end{equation}
which is also a convex set.

Now, we use the fact that the Gelbrich hull or the 2-Wasserstein ball in our case can be expressed as the union of Chebyshev ambiguity sets with means and covariances in the uncertainty set~\eqref{Utheta}. Equivalently, using the uncertainty set~\eqref{Vtheta}, the Gelbrich hull can be viewed as the union of Chebyshev ambiguity sets with first- and second-order moments in the uncertainty set~\eqref{Vtheta}, i.e.,
\begin{equation*}
\begin{split}
\mathbb{D} &= \bigcup_{(\mu,\Sigma)\in\mathcal{U}_{\theta}(\tilde{\mu},\tilde{\Sigma})}  \mathcal{C} (\mathbb{R}^{n_y},\mu,\Sigma)\\
&= \bigcup_{(\mu,M)\in\mathcal{V}_{\theta}(\tilde{\mu},\tilde{\Sigma})}  \mathcal{C} (\mathbb{R}^{n_y},\mu,M-\mu\mu^\top),
\end{split}
\end{equation*}
where $\mathcal{C} (\mathbb{R}^{n_y},\mu,\Sigma)$ is the Chebyshev ambiguity set containing all distributions on $\mathbb{R}^{n_y}$ with mean $\mu$ and covariance bounded above by $\Sigma$. 
Thus, we have
\begin{equation*}
\begin{split}
&\sup_{\mathrm{Q}\in\mathbb{D}}\mathbb{E}^\mathrm{Q}\big[\big( J(y_r,y_o)-z\big)^+\big] =\sup_{(\mu,M)\in\mathcal{V}_{\theta}(\tilde{\mu},\tilde{\Sigma})} \sup_{\mathrm{Q}\in\mathcal{C}(\mathbb{R}^{n_y},\mu,M-\mu\mu^{\top})} \mathbb{E}^\mathrm{Q}\big[\big( J(y_r,y_o)-z\big)\big].
\end{split}
\end{equation*}

In the above equation the inner optimization problem measures the risk for all distributions with given first- and second-order moments, while the outer one considers the ambiguity in those moments with respect to the Wasserstein distance. Such two-layered optimization provides an addition robustness, accounting for moment ambiguities.

From~\cite[Lemma~A.1]{zymler2013distributionally} the inner supremum gets the following dual form:
\begin{align}
&\begin{cases}\inf  \; &\tau+2\gamma^\top \mu + \langle \Gamma, M \rangle\\
\textnormal{s.t. } & \tau+2\gamma^\top y_o+\langle \Gamma, y_o y_o^\top\rangle \geq J(y_r,y_o\big)-z\; \forall y_o\\
& \tau+2\gamma^\top y_o+\langle \Gamma, y_o y_o^\top\rangle \geq 0\; \forall y_o\\
& \tau\in \mathbb{R}, \gamma\in\mathbb{R}^{n_y}, \Gamma\in\mathbb{S}^{n_y}
\end{cases} \nonumber\\
&=\begin{cases}\inf  \; &\tau+2\gamma^\top \mu + \mathrm{Tr}[\Gamma M]\\
\textnormal{s.t.} & \begin{bmatrix}\Gamma+I & \gamma-y_r\\ (\gamma-y_r)^\top & \tau+ z+\|y_r\|_2^2 \end{bmatrix}\succeq 0\\
& \begin{bmatrix}\Gamma & \gamma\\ \gamma^\top & \tau\end{bmatrix}\succeq 0\\
& \tau\in \mathbb{R}, \gamma\in\mathbb{R}^{n_y}, \Gamma_+\in\mathbb{S}^{n_y},\end{cases}\label{Cheb}
\end{align}
where the second problem is obtained by replacing the quadratic constraint with the corresponding semidefinite one. By weak duality, the dual provides an upper bound of the inner supremum. Applying minimax inequality and replacing the inner supremum with its dual, we arrive at the following upper bound for the worst-case expectation:
\begin{equation}
\begin{split}
\inf_{\tau,\gamma,\Gamma} &\Big\{ \tau+\sup_{(\mu,M)\in\mathcal{V}_\theta(\tilde{\mu},\tilde{\Sigma})} \big(2\gamma^\top \mu+\mathrm{Tr}[\Gamma M] \big )  \mid \mbox{constraints in~\eqref{Cheb}} \Big\}.
\end{split}
\end{equation}
The inner supremum has an interesting form, which can be rewritten by the support function of $\mathcal{V}_{\theta}(\tilde{\mu},\tilde{\Sigma})$ evaluated at $(2\gamma,\Gamma)$. The support function $\sigma_{\mathcal{V}_{\theta}(\tilde{\mu},\tilde{\Sigma})}(q,Q)$ for any $q\in\mathbb{R}^m$ and $Q\in\mathbb{S}^m$ can found by solving the following SDP problem~\cite{kuhn2019wasserstein}:
\begin{align*}
\sigma_{\mathcal{V}_{\theta}(\tilde{\mu},\tilde{\Sigma})}(q,Q)=\inf_{\lambda,\varepsilon, Z} \; & \lambda(\theta^2-\|\tilde{\mu}\|_2^2-\mathrm{Tr}[\tilde{\Sigma}])+\varepsilon+\mathrm{Tr}[Z]\\
\textnormal{s.t.} \; & \begin{bmatrix}\lambda I-Q & \lambda\tilde{\mu}+\frac{q}{2}\\ \lambda \tilde{\mu}^\top+\frac{q^\top}{2} & \varepsilon\end{bmatrix}\succeq 0\\
& \begin{bmatrix}\lambda I-Q & \lambda\tilde{\Sigma}^{1/2} \\ \lambda\tilde{\Sigma}^{1/2} & Z\end{bmatrix}\succeq 0\\
& \lambda\in\mathbb{R}_+, \varepsilon\in\mathbb{R}_+, Z\in\mathbb{S}_+^m.
\end{align*}

The result of the theorem follows from replacing the support function with the corresponding SDP and plugging in the expression for the worst-case expectation back into DR-risk.
\end{proof}

\subsection{Proof of Corollary~\ref{cor:sdp}}

\begin{proof}
To derive the dual of \eqref{DR-CVaR}, we write the Lagrangian functions with multipliers $X,Y,W,V\succeq 0$ and $\eta,\beta \geq 0$ as
\begin{align*}
\mathscr{L}&=z+\frac{\Big[\tau+\varepsilon+\mathrm{Tr}[Z]+\lambda\big(\theta^2-\|\tilde{\mu}\|_2^2-\mathrm{Tr}[\tilde{\Sigma}\big] \big )\Big]}{1-\alpha}\\
&-\langle X_{11},\lambda I-\Gamma\rangle - 2 X_{12}^\top (\gamma +\lambda\tilde{\mu})- X_{22}\varepsilon-\langle Y_{11},\lambda I-\Gamma\rangle\\
&-2\langle Y_{12},\lambda \tilde{\Sigma}^{1/2} \rangle - \langle Y_{22},Z\rangle-\langle W_{11}, \Gamma+I\rangle - 2 W_{12}^\top (\gamma-y_r)\\
&- W_{22} (\tau+y_r^\top y_r +z)-\langle V_{11},\Gamma\rangle -2  V_{12}^\top \gamma -V_{22}\tau -\langle U, Z\rangle-\eta \lambda-\beta\varepsilon,
\end{align*}
where $X_{ij}$ is the $(i,j)$ entry of matrix $X$ and $\langle \cdot,\cdot\rangle$ is the matrix inner product. The dual function $g$ is obtained by minimizing the Lagrangian function with respect to the primal variables:
\begin{equation*}
\begin{split}
g &= -\mathrm{Tr}[W_{11}]-2W_{12}^\top y_r -W_{22}y_r^\top y_r +\min_z (1-W_{22})z\\
&+\min_{\lambda} \Big(\frac{\theta^2-\|\tilde{\mu}\|_2^2 - \mathrm{Tr}\big[\tilde{\Sigma}\big]}{1-\alpha}-\mathrm{Tr}\big[X_{11}+Y_{11}+2 Y_{12}^\top \tilde{\Sigma}^{1/2}\big] -2 X_{12}^\top \tilde{\mu}-\eta\Big)\lambda\\
& +\min_{\gamma} (-2 X_{12}-2W_{12}-2V_{12})^\top \gamma  + \min_\varepsilon \Big(\frac{1}{1-\alpha} - X_{22}-\beta\Big)\varepsilon +\min_\tau \Big(\frac{1}{1-\alpha} - W_{22} - V_{22}\Big)\tau\\
&+\min_{\Gamma} \langle X_{11}+Y_{11}-W_{11}-V_{11},\Gamma\rangle+\min_{Z} \langle I-Y_{22}-U,Z\rangle.
\end{split}
\end{equation*}
Finally, solving the inner minimization problems and maximizing the dual function $g$ with respect to the dual variables, we obtain the dual form \eqref{DR-CVaR_dual}.

\end{proof}

\bibliographystyle{IEEEtran}
\bibliography{reference}

\end{document}